\documentclass{article}
\usepackage{iclr2027_conference,times}
\iclrpreprintcopy

\usepackage{amsmath,amsfonts,bm}

\def\eqref#1{equation~\ref{#1}}

\def\1{\bm{1}}

\DeclareMathAlphabet{\mathsfit}{\encodingdefault}{\sfdefault}{m}{sl}
\SetMathAlphabet{\mathsfit}{bold}{\encodingdefault}{\sfdefault}{bx}{n}

\usepackage[hidelinks]{hyperref}
\usepackage{url}
\usepackage{graphicx}
\usepackage{amsmath}
\usepackage{amssymb}
\usepackage{amsthm}
\usepackage{booktabs}
\usepackage{multirow}
\usepackage{caption}
\usepackage{float}
\usepackage[linesnumbered,ruled,vlined]{algorithm2e}
\SetKwInput{KwInput}{Input}
\SetKwInput{KwOutput}{Output}

\newtheorem{theorem}{Theorem}[section]
\theoremstyle{definition}

\title{Toward Omni Multimodal Graph Foundation Model: A Topology-Driven Binding Approach}

\author{%
  Xunkai Li$^{1}$ \quad Chenxi Wan$^{1}$ \quad Yinlin Zhu$^{2}$ \quad Wang Luo$^{2}$ \\
  \bfseries Hongchao Qin$^{1}$ \quad Rong-Hua Li$^{1}$\thanks{Correspondence to: Rong-Hua Li \href{mailto:lironghuabit@126.com}{\texttt{<lironghuabit@126.com>}}.} \quad Guoren Wang$^{1}$ \\
  \normalfont $^{1}$Beijing Institute of Technology \\
  \normalfont $^{2}$Sun Yat-sen University%
}

\begin{document}

\maketitle

\begin{abstract}
Multimodal graph foundation models (MGFMs) seek to learn generalizable representations from large-scale graphs with heterogeneous node modalities. However, real-world Multimodal-Attributed Graphs (MAGs) often contain incomplete node attributes, limiting the scale and diversity of available pretraining corpora. Besides, existing MGFMs primarily incorporate graph topology as structural context, overlooking its role in guiding multimodal binding and shaping a unified representation space. To address these challenges, we propose \underline{\textbf{GraphBind}}, a topology-driven approach that uses graph topology to bind rich modality information into a unified shared space. GraphBind is motivated by the stability of graph topology, which provides structural references and complementary semantic information for multimodal binding. Concretely, GraphBind uses topology to organize self semantics and reliable neighborhood semantics into a global shared space that integrates structure and semantics, and adapts this space to discriminative and generative tasks through lightweight interfaces. Extensive experiments against 11 representative baselines demonstrate that GraphBind achieves leading performance on both discriminative and generative tasks, with relative improvements of up to 28.1\% over the strongest baseline.

\end{abstract}

\section{Introduction}

Graph foundation models (GFMs) aim to learn generalizable graph representations through large-scale pretraining across graph domains and downstream tasks~\cite{liu2023towards,wang2025graphfoundationcomprehensive}, showing practical value in cross-domain graph classification and graph instruction tuning~\cite{liu2024oneforall,tang2024graphgpt}. As real-world graphs increasingly contain heterogeneous modalities such as text and images~\cite{zhu2025mosaic,wan2026openmag}, multimodal graph foundation models (MGFMs) have attracted growing attention in recent years. By integrating graph topology with multimodal attributes, MGFMs learn transferable representations for diverse graph tasks while supporting multimodal reasoning and generation~\cite{he2025unigraph2,liu2026planet,sun2026mario}.

Despite substantial progress, learning reliable shared representations from graphs with incomplete node modalities remains challenging, which creates a critical data bottleneck for large-scale MAG pretraining. This bottleneck can be summarized from three perspectives: (1) \textbf{Incomplete Pretraining Corpora.} Real-world MAGs often contain incomplete node attributes. Restricting pretraining to nodes with complete observations substantially limits the scale and diversity of usable graph corpora~\cite{wu2026missingmodality,wang2023shaspec}. (2) \textbf{Fragmented Semantic Spaces.} Separate modality encoding and alignment may produce fragmented semantic spaces, making it difficult to construct the unified representation space required by MGFMs. (3) \textbf{Limited Topological Binding.} Existing MGFMs primarily incorporate graph topology as structural context, with limited exploration of its role in guiding multimodal binding and shaping a unified representation space~\cite{he2025unigraph2,liu2026planet,sun2026mario}. Our empirical study in Sec.~\ref{sec:empirical_study} shows that graph neighborhoods provide additional semantic signals beyond node-local modalities. This finding supports the use of topology as a stable reference when learning from pretraining corpora with incomplete modalities.

Building upon this observation, we propose \textbf{GraphBind}, a novel MGFM framework that binds heterogeneous modality semantics through graph topology into a unified representation space. The key insight of GraphBind is that graph topology provides a stable structural reference and useful neighborhood information for multimodal binding. Concretely, GraphBind consists of three coupled designs. (1) \textbf{Incomplete Corpus Pretraining.} GraphBind explicitly introduces modality absence into pretraining, enabling the model to fully utilize incomplete MAGs as pretraining corpora and improve its robustness. (2) \textbf{Topology-Driven Multimodal Binding.} GraphBind uses graph topology as structural guidance to organize node semantics, neighborhood context, and multimodal attributes into a unified representation space. (3) \textbf{Unified Adaptation.} GraphBind adapts the unified representation space to both discriminative and generative tasks through lightweight fine-tuning while preserving the shared semantic interface learned during pretraining.

\textbf{Our Contributions:} (1) \textbf{In-depth Investigation.} We revisit realistic modality absence in MAGs and empirically show that graph neighborhoods provide useful recovery signals beyond the available attributes of each node. (2) \textbf{Novel Framework.} We propose GraphBind, a topology-guided multimodal graph foundation model that integrates missing-modality pretraining, topology-driven semantic recovery, and unified representation adaptation. (3) \textbf{SOTA Performance.} Extensive experiments against eleven representative baselines demonstrate leading performance on discriminative and generative tasks, with relative improvements of up to 5.4\% in MRR and 28.1\% in BLEU-4, respectively.

\section{Preliminaries}
\label{sec:preliminaries}

\subsection{Problem Formulation}

A multimodal attributed graph (MAG) is denoted as $\mathcal{G}=(\mathcal{V},\mathcal{E},\{\mathbf{X}^{(m)}\}_{m\in\mathcal{M}},\mathbf{A})$, where $\mathcal{V}=\{v_i\}_{i=1}^{N}$ is the node set, $\mathcal{E}$ is the edge set, $N=|\mathcal{V}|$, and $\mathbf{A}\in\mathbb{R}^{N\times N}$ is the adjacency matrix. Each node $v_i$ can have multiple modalities from the modality set $\mathcal{M}$, such as text and images. For modality $m\in\mathcal{M}$, $\mathbf{x}_i^{(m)}\in\mathbb{R}^{d_m}$ denotes the raw attribute of node $v_i$, and $\mathbf{X}^{(m)}\in\mathbb{R}^{N\times d_m}$ denotes the modality feature matrix. 

Given a pretraining collection of MAGs $\mathbb{G}_{\mathrm{pre}}$, an MGFM $f_{\Theta}$ learns generalizable graph representations through self-supervised pretraining across graphs. The pretrained model maps each node $v_i$ to a unified representation $\mathbf{z}_i\in\mathbb{R}^{d}$ and is adapted with task-specific modules. We consider two task families. Discriminative tasks include node classification and link prediction. Generative tasks include Graph to Text Generation (G2Text) and Graph to Image Generation (G2Image). Detailed task descriptions are provided in Appendix~\ref{app:experiment_setup}.

\subsection{Graph Foundation Models}

Graph foundation models learn transferable representations through pretraining across graph domains and tasks~\cite{liu2023towards,wang2025graphfoundationcomprehensive}. Existing methods represent graph computation with task trees or transferable vocabularies~\cite{wang2025git,wang2024gft}, learn reusable graph tokens~\cite{xia2024opengraph}, or model graphs in geometric spaces~\cite{sun2025riemanngfm}. These methods support broad transfer across graph tasks, while their pretraining mainly focuses on graph structure and a single attribute space. MAGs additionally require a shared space that connects heterogeneous node attributes with topology.

\subsection{Multimodal Graph Learning}

Multimodal graph learning combines graph topology with node attributes such as text and images~\cite{zhu2025mosaic,wan2026openmag}. Earlier methods use graph convolution, attention, or global graph structures to fuse modalities for specific tasks~\cite{mmgcn,mgat,lgmrec}, while recent methods jointly update topology and modality representations or assign different roles to neighbors~\cite{zhu2026tmte,zuo2026rolemag}. Multimodal graph foundation models further learn shared representations for transfer across datasets and tasks~\cite{he2025unigraph2,liu2026planet}. Together, these studies explore how node attributes and graph relations can be jointly modeled to support prediction within individual graphs and transfer across graph domains and tasks.

\section{Empirical Investigation}
\label{sec:empirical_study}

In this section, we empirically examine why graph topology can serve as a structural reference for multimodal graph foundation modeling. We focus on two research questions: \textbf{Q1}: Are node-local modalities sufficient for multimodal binding? \textbf{Q2}: Can graph topology support modality recovery when neither modality of the center node is available?

\vspace{+0.1cm}
\noindent \textbf{Datasets, Methods, and Tasks.} We conduct two recovery studies to analyze the semantic roles of node-local modalities and graph topology. We compare GraphBind with MMGCN, UniGraph2, and MGAT on modality feature recovery. Recovery quality is measured by the cosine similarity between the predicted feature and the corresponding reference feature. The first study is conducted on Grocery under single-modality absence, where either the textual or visual modality of the center node is unavailable. We compare two recovery settings with identical target-modality masks. \textit{Single-Modality} (\textbf{SM}) recovery uses only the center node's remaining modality, with neighborhood messages and topology inputs disabled. \textit{Graph-Assisted} (\textbf{GA}) recovery uses available features from one-hop neighbors. Neither setting receives the center node's target modality. The second study is conducted on RedditS when both modalities are absent, where both textual and visual modalities of the center node are unavailable. We use a feature-mean prototype as the non-structural reference and evaluate GA recovery for textual and visual semantics from surrounding nodes.

\begin{figure*}[t]
    \centering
    \begin{minipage}{0.48\textwidth}
        \centering
        \includegraphics[width=\linewidth]{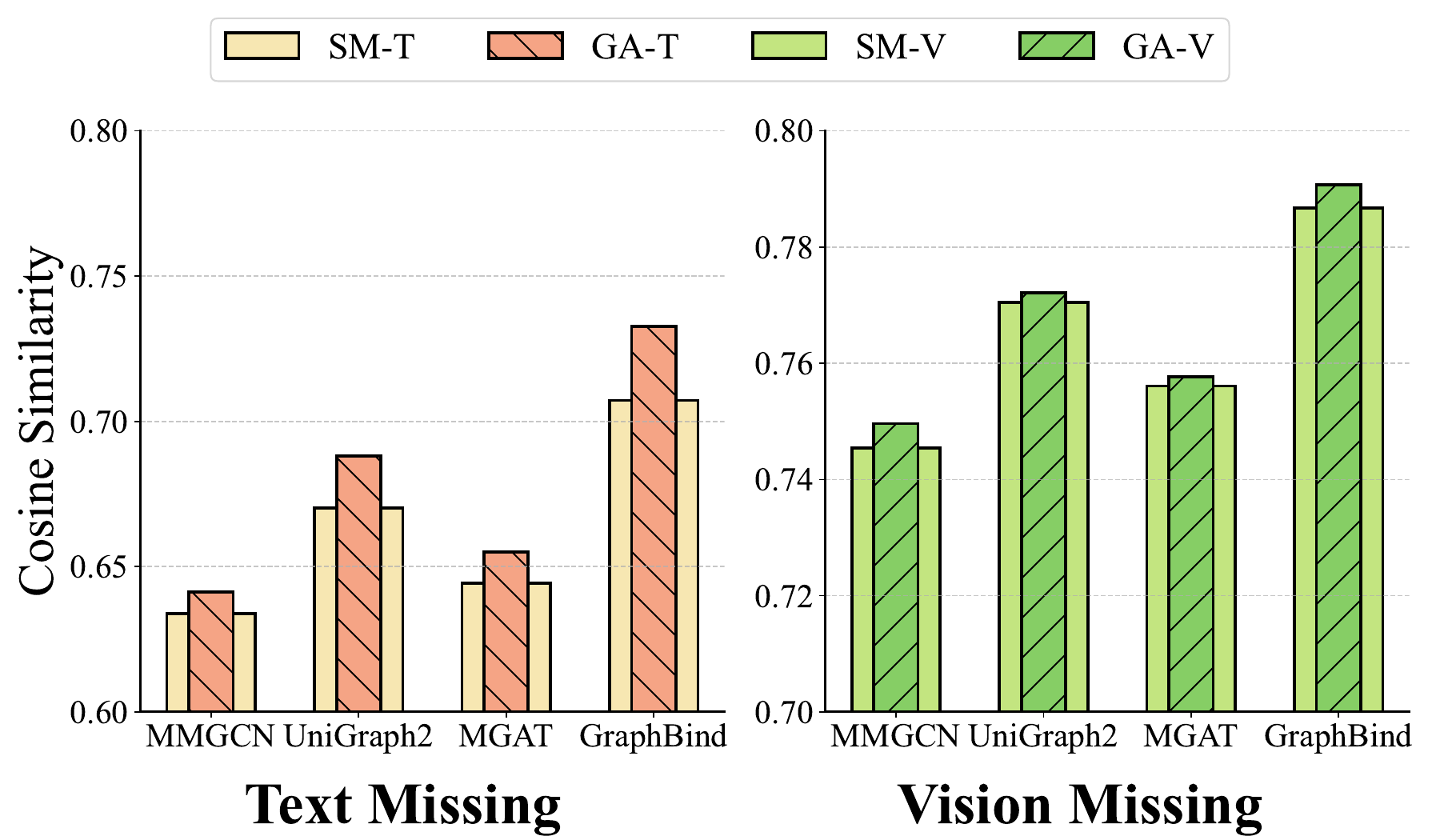}
        \caption{Comparison of recovery settings under single-modality absence on Grocery. \textit{T} and \textit{V} denotes text and vision recovery, respectively.}
        \label{fig:emp1}
    \end{minipage}
    \hfill
    \begin{minipage}{0.48\textwidth}
        \centering
        \includegraphics[width=\linewidth]{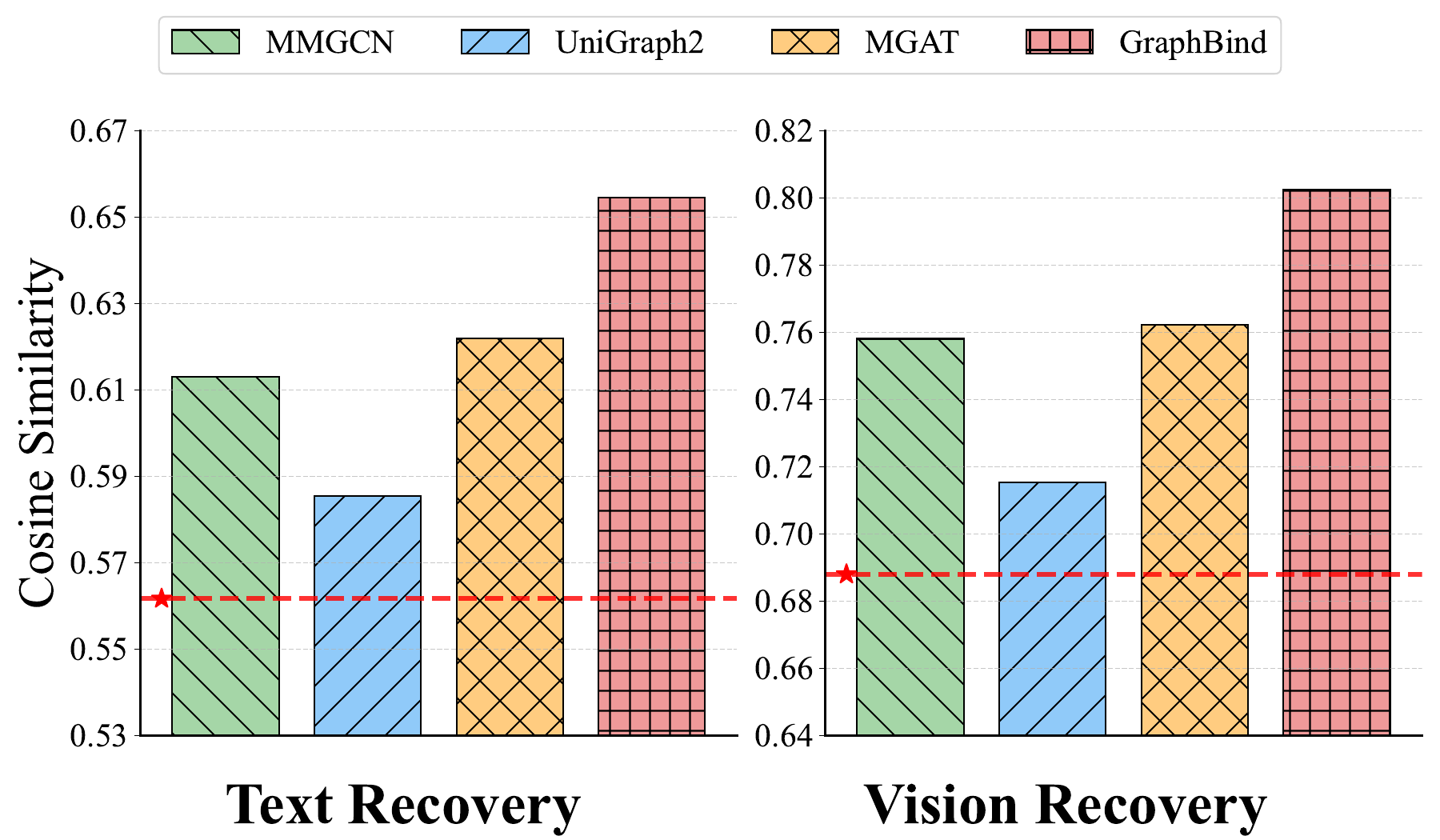}
        \caption{Graph-Assisted semantic recovery under both-modality absence on RedditS. Red lines denote the feature-mean prototype.}
        \label{fig:emp2}
    \end{minipage}
\end{figure*}

\vspace{+0.1cm}
\noindent \textbf{Individual node modalities provide limited semantic information (Answer for Q1).} We first examine whether the remaining modality of a node provides sufficient information for semantic recovery. As shown in Fig.~\ref{fig:emp1}, SM consistently achieves lower recovery similarity than GA across all four methods under both absence settings. This observation indicates that the remaining modality provides useful but limited binding information. The limitation is especially clear when textual semantics need to be inferred from visual attributes, where the available visual information only partially reflects the missing textual semantics. Therefore, combining available node attributes with neighborhood information supports more effective multimodal binding in MAGs.

\vspace{+0.1cm}
\noindent \textbf{Graph topology provides shared semantic information (Answer for Q2).} We further examine whether topology can provide semantic information when the center node has no available modality. As shown in Fig.~\ref{fig:emp2}, all graph-assisted methods achieve higher recovery similarity than the feature-mean prototype for both text and vision when neither modality of the center node is available. This result shows that graph neighborhoods preserve semantic information that is shared across nodes and transferable across target modalities. This observation suggests that graph topology can act as a structural reference for organizing surrounding node semantics in MAGs.

\vspace{+0.1cm}
The two observations above motivate a topology-driven approach to multimodal graph foundation modeling. The available modalities provide useful but limited information for recovering missing modalities, as different modalities capture different aspects of node semantics. In comparison, graph topology provides a structural context for integrating multimodal information across connected nodes. These findings suggest that graph topology can serve as a structural reference for integrating multimodal information across nodes into shared semantic representations. This supports MGFM pretraining on graph corpora with incomplete node modalities.

\section{Methodology}
\label{sec:method}

\begin{figure*}[!t]
    \centering
    \includegraphics[width=1.0\linewidth]{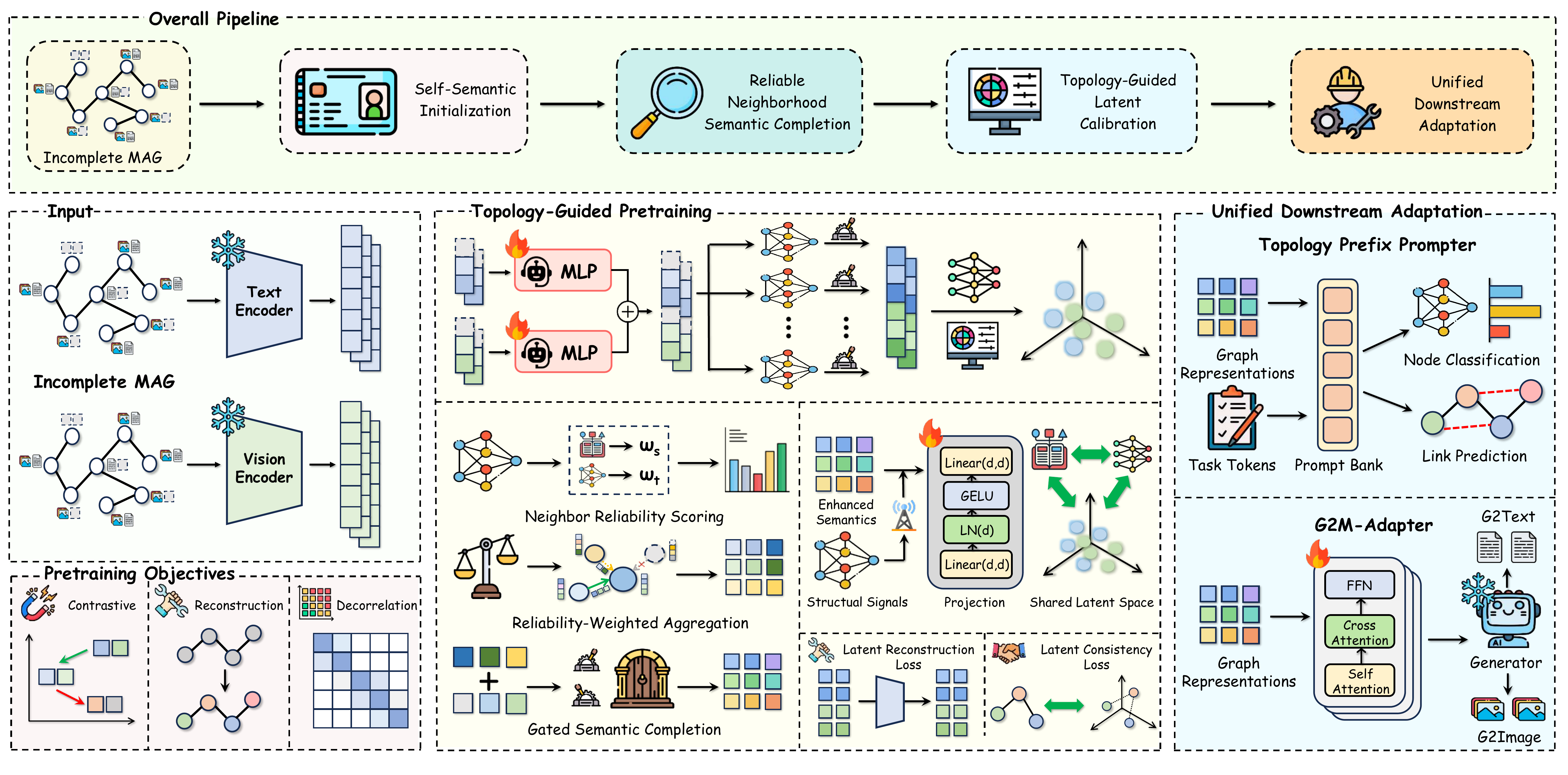}
    \caption{The overall framework of GraphBind.}
    \label{fig:framework}
\end{figure*}

In this section, we present \textbf{GraphBind}, a multimodal graph foundation model that uses graph topology as a structural reference to bind node modalities into a unified representation space. As illustrated in Fig.~\ref{fig:framework}, we first simulate realistic modality absence during pretraining, then construct a topology-guided shared representation through self-semantic initialization, reliable neighborhood aggregation, and topology-guided latent calibration, and finally adapt the learned shared space to discriminative and generative tasks with lightweight task interfaces.

\vspace{+0.1cm}
\subsection{Modality Absence Simulation}

Large-scale MAG pretraining cannot rely only on nodes with complete modalities, since naturally collected graph corpora have uneven modality availability. GraphBind therefore controls modality availability for each node, allowing incomplete MAGs to participate in pretraining. 

Inspired by~\cite{wu2026missingmodality}, we simulate realistic modality absence during pretraining. Specifically, we consider three simulation protocols: \textit{Independent Absence} randomly removes each modality. \textit{Neighborhood Correlated Absence} removes modalities from adjacent nodes to represent local acquisition failure. \textit{Property Dependent Absence} sets the absence probability according to node properties such as degree or community membership. The input availability received by the encoder is defined as
\begin{equation}
\tilde{a}_i^{(m)}
=
a_i^{(m)}(1-s_i^{(m)})(1-r_i^{(m)}),
\label{eq:effective_availability}
\end{equation}
where $a_i^{(m)}$, $s_i^{(m)}$, and $r_i^{(m)}$ indicate original availability, simulated absence, and reconstruction masking. The encoder receives only observed and unmasked modalities, while the original features remain supervision targets. Sampling details are provided in the Appendix~\ref{app:detailed_theoretical_analysis}.

\vspace{+0.1cm}
\subsection{Topology-Guided Pretraining}

\vspace{+0.1cm}
\noindent \textbf{Incomplete Self-Semantic Initialization.}
Under modality absence, the remaining attributes of a node form an incomplete semantic state. We extract modality features with frozen encoders, project them into a common hidden space, and average only the modalities visible to the encoder:
\begin{equation}
\hat{\mathbf{x}}_i^{(m)}=P_m(\mathbf{x}_i^{(m)}),\quad
\mathbf{h}_i^{(0)}=
\frac{\sum_{m\in\mathcal{M}}\tilde{a}_i^{(m)}
\hat{\mathbf{x}}_i^{(m)}}
{\max(1,\sum_{m\in\mathcal{M}}\tilde{a}_i^{(m)})},
\label{eq:self_initialization}
\end{equation}
where $P_m$ is the modality projector and $\mathbf{h}_i^{(0)}$ is the initial self semantic state. For intuition, Eq.~\ref{eq:self_initialization} initializes node $v_i$ from the semantic information available under the current absence pattern. The averaging operation removes scale differences caused by varying numbers of observed modalities.

\vspace{+0.1cm}
\noindent \textbf{Reliable Neighborhood Semantic Completion.}
Node-local observations alone may provide insufficient information for multimodal binding. Although graph neighborhoods can supply missing semantics, not every neighbor provides relevant information. We therefore use topology to score and select reliable neighbors before aggregation. Let $\mathbf{t}_i=f_{\mathrm{topo}}(\mathbf{p}_i)$ be the topology encoding of node $v_i$. For an edge $(j\rightarrow i)$, GraphBind scores neighbor reliability from self semantics, neighbor semantics, topology encodings, their interactions, and local edge strength:
\begin{equation}
r_{ij}
=
\sigma(\mathbf{w}_r^{\top}\boldsymbol{\phi}_{ij}),
\quad
\boldsymbol{\phi}_{ij}
=
\Phi(\mathbf{h}_i^{(0)},\mathbf{h}_j^{(0)},\mathbf{t}_i,\mathbf{t}_j,\omega_{ij}),
\label{eq:reliability_score}
\end{equation}
where $\omega_{ij}$ is a normalized diffusion weight, $\Phi(\cdot)$ concatenates and projects the input features through a GELU layer, and $r_{ij}$ measures whether neighbor $v_j$ is useful for node $v_i$. The reliable message and completed state are computed as
\begin{equation}
\begin{aligned}
\alpha_{ij}
&=
\frac{r_{ij}}{\sum_{k\in\mathcal{N}(i)}r_{ik}+\epsilon},
\quad
\mathbf{m}_i
=
\sum_{j\in\mathcal{N}(i)}\alpha_{ij}\mathbf{W}_n\mathbf{h}_j^{(0)},\\
\mathbf{s}_i
&=
\mathbf{g}_i\odot\mathbf{W}_s\mathbf{h}_i^{(0)}
+
(1-\mathbf{g}_i)\odot\mathbf{m}_i,
\end{aligned}
\label{eq:semantic_completion}
\end{equation}
where $\mathbf{g}_i$ is a learned feature-wise gate balancing self and neighborhood semantics, and $\mathbf{s}_i$ is the completed semantic state. The reliability weights prioritize neighbors that provide relevant semantic and structural information.

\vspace{+0.1cm}
\noindent \textbf{Topology-Guided Shared Latent Calibration.}
The completed state combines local node and neighborhood semantics, but it lacks global structural information. Therefore, we project it into a shared latent space and use topology to generate feature-wise calibration parameters:
\begin{equation}
\mathbf{u}_i=P_{\mathrm{sh}}(\mathbf{s}_i),\quad
[\boldsymbol{\gamma}_i,\boldsymbol{\beta}_i]=C_{\mathrm{topo}}(\mathbf{t}_i),\quad
\mathbf{z}_i^{\mathrm{anc}}=\boldsymbol{\gamma}_i\odot\mathbf{u}_i+\boldsymbol{\beta}_i,
\label{eq:topology_calibration}
\end{equation}
where $P_{\mathrm{sh}}$ projects the completed state into the shared space and $C_{\mathrm{topo}}$ produces calibration parameters. This calibration places node semantics in a common coordinate system. To prevent this projection from discarding useful local semantics, we add two lightweight constraints over reliable neighbors $R(i)$ selected by their reliability scores:
\begin{equation}
\begin{aligned}
\mathcal{L}_{\mathrm{lrec}}
&=
\frac{1}{N}\sum_i
\|D_{\mathrm{sh}}(\mathbf{u}_i)-\operatorname{sg}(\mathbf{s}_i)\|_2^2,\\
\mathcal{L}_{\mathrm{lcons}}
&=
\sum_i\sum_{j\in R(i)}
r_{ij}\|\Delta\mathbf{z}_{ij}^{\mathrm{anc}}-\Delta\mathbf{s}_{ij}\|_2^2 ,
\end{aligned}
\label{eq:latent_constraints}
\end{equation}
where $\operatorname{sg}(\cdot)$ denotes stop-gradient, $\Delta\mathbf{z}_{ij}^{\mathrm{anc}}=\mathbf{z}_i^{\mathrm{anc}}-\mathbf{z}_j^{\mathrm{anc}}$, and $\Delta\mathbf{s}_{ij}=\mathbf{s}_i-\mathbf{s}_j$. The reconstruction constraint retains completed semantics after projection, and the consistency constraint keeps local geometry stable after calibration. Together, they ensure that topology organizes multimodal semantics without collapsing informative local variation. Due to space limitations, the complete pretraining objective and individual loss terms are detailed in Appendix~\ref{app:detailed_theoretical_analysis}.

\vspace{+0.1cm}
\subsection{Unified Downstream Adaptation}

\vspace{+0.1cm}
Based on the shared space constructed during pretraining, we aim to adapt GraphBind to different downstream tasks through lightweight tuning without disrupting the pretrained representation geometry. For discriminative tasks, we design a topology prefix prompter to perform task-specific geometric adjustment. For generative tasks, the G2M-Adapter converts shared graph representations into continuous token embeddings for language and image generators.

\vspace{+0.1cm}
\noindent \textbf{Topology Prefix Prompter.}
For discriminative tasks, different objectives such as node classification and link prediction may require different local decision geometries. Inspired by continuous prompt learning~\cite{zhou2022coop} and graph prompt tuning~\cite{liu2023graphprompt,sun2023allinone}, we introduce a shared prompt bank $\mathbf{P}=[\mathbf{p}_1,\ldots,\mathbf{p}_K]^{\top}$ and a task token $\mathbf{c}_s$. The prompt bank provides reusable task context, while $\mathbf{c}_s$ selects the context needed by a specific task:
\begin{equation}
b_{ik}^{(s)}=\operatorname{Attn}([\mathbf{z}_i^{\mathrm{anc}};\mathbf{c}_s],\mathbf{p}_k),\quad
\bar{\mathbf{p}}_i^{(s)}=\sum_{k=1}^{K}\operatorname{Softmax}_k(b_{ik}^{(s)})\mathbf{p}_k,
\label{eq:prompt_attention}
\end{equation}
The prompted representation is obtained by adding the selected prompt context and task token to $\mathbf{z}_i^{\mathrm{anc}}$, followed by a small residual update. This adapter changes the representation only through a lightweight task-specific path, allowing the downstream head to specialize while the pretrained shared space remains stable.

\vspace{+0.1cm}
\noindent \textbf{G2M-Adapter.}
Generative tasks require graph side information to be expressed as context tokens compatible with pretrained language or visual generators. This interface should retain the node semantics and neighborhood information learned during pretraining. Inspired by lightweight modality bridging~\cite{li2023blip2}, we collect the target node and its one-hop neighbors into a graph context sequence $\mathbf{Z}_{\mathrm{topo}}$. Gated multihead attention converts their shared representations into tokens for the generator:
\begin{equation}
\mathbf{E}_s=F_s(\mathbf{Z}_{\mathrm{topo}}),\quad \bar{\mathbf{Z}}_s=O_s\!\left(\mathbf{Q}_s+\alpha_s\operatorname{MHA}(\mathbf{Q}_s,\mathbf{E}_s,\mathbf{E}_s)\right),
\label{eq:topology_condition_sequence}
\end{equation}
where $s$ denotes text or image generation, and $F_s$ and $O_s$ denote the graph context encoder and output token mapping, respectively. $\mathbf{Q}_s$ contains learnable queries for text and equals $\mathbf{E}_s$ for images; $\alpha_s$ is a scalar gate initialized to zero. For intuition, attention allows each query to aggregate relevant node information, while the gate controls the contribution of the aggregated context. The resulting tokens serve as language input prefixes or diffusion model inputs, allowing the generator to use both target-node semantics and neighborhood information. This design reduces the representation gap between continuous graph features and the token interface required by the generator.

\vspace{+0.1cm}
\subsection{Theoretical Analysis}

\subsubsection{How Do Reliable Neighbors Improve Robustness?}

\begin{theorem}[Reliable Neighbor Robustness]
\label{thm:reliable_neighbor}
Let $R(i)$ and $U(i)=\mathcal{N}(i)\setminus R(i)$ denote the reliable and less reliable neighbors of node $i$. Suppose their scores satisfy $r_{ij}\geq r_+$ on $R(i)$ and $r_{ij}\leq r_-$ on $U(i)$, where $r_+>r_-$. Consider two completion inputs with the same center-node state and reliability scores that differ only in the transformed representations of neighbors in $U(i)$, each by at most $\delta$. Their completed states satisfy
\begin{equation}
\|\mathbf{s}_i-\mathbf{s}_i'\|_2
\leq
\frac{|U(i)|r_-}{|R(i)|r_++|U(i)|r_-+\epsilon}\,\delta.
\label{eq:reliable_neighbor_bound}
\end{equation}
\end{theorem}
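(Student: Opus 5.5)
The plan is to unfold the completion map in Eq.~\ref{eq:semantic_completion} and observe that, once the center-node state and the reliability scores are fixed, the only quantity that changes between the two inputs is the neighborhood message $\mathbf{m}_i$. Write $\mathbf{y}_j=\mathbf{W}_n\mathbf{h}_j^{(0)}$ and $\mathbf{y}_j'$ for the transformed neighbor representations of the two inputs. By hypothesis, $\mathbf{y}_j=\mathbf{y}_j'$ for $j\in R(i)$ and $\|\mathbf{y}_j-\mathbf{y}_j'\|_2\le\delta$ for $j\in U(i)$. I take the gate $\mathbf{g}_i$ to be identical in both inputs, since it is produced from the shared center-node state. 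The self term $\mathbf{g}_i\odot\mathbf{W}_s\mathbf{h}_i^{(0)}$ is then also identical and cancels, which gives
\begin{equation*}
\mathbf{s}_i-\mathbf{s}_i'=(1-\mathbf{g}_i)\odot(\mathbf{m}_i-\mathbf{m}_i').
\end{equation*}
Each entry of $\mathbf{g}_i$ lies in $[0,1]$ because it is a sigmoid gate. So elementwise multiplication by $1-\mathbf{g}_i$ does not increase the Euclidean norm, and $\|\mathbf{s}_i-\mathbf{s}_i'\|_2\le\|\mathbf{m}_i-\mathbf{m}_i'\|_2$.

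Next I bound the message difference. The weights $\alpha_{ij}$ are the same in both inputs because the scores $r_{ij}$ are. Hence $\mathbf{m}_i-\mathbf{m}_i'=\sum_{j\in U(i)}\alpha_{ij}(\mathbf{y}_j-\mathbf{y}_j')$, and the triangle inequality gives
\begin{equation*}
\|\mathbf{m}_i-\mathbf{m}_i'\|_2\le\delta\sum_{j\in U(i)}\alpha_{ij}=\delta\,\frac{S_U}{S_R+S_U+\epsilon},
\end{equation*}
where $S_R=\sum_{j\in R(i)}r_{ij}$ and $S_U=\sum_{j\in U(i)}r_{ij}$.

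The last step is a monotonicity argument for $f(x,y)=x/(x+y+\epsilon)$ with $x,y\ge0$ and $\epsilon>0$. It is nondecreasing in $x$, since $\partial_x f=(y+\epsilon)/(x+y+\epsilon)^2\ge0$, and nonincreasing in $y$. The score assumptions give $S_U\le|U(i)|r_-$ and $S_R\ge|R(i)|r_+$. Substituting both bounds into $f(S_U,S_R)$ therefore only increases it, which yields exactly Eq.~\ref{eq:reliable_neighbor_bound}. The hypothesis $r_+>r_-$ is not needed for the inequality itself. It only explains why the resulting coefficient is small.

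The main obstacle is the gate. If $\mathbf{g}_i$ is computed from $\mathbf{m}_i$ as well as the self state, the self term no longer cancels cleanly, and the bound would need an extra Lipschitz term. I would handle this by stating explicitly that "same center-node state" includes the gate. This matches the statement's premise that only the $U(i)$ representations differ. I would also note that the perturbation is measured after $\mathbf{W}_n$, as the statement says, so no operator-norm factor appears.
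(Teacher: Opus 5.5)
Your proposal is correct and follows essentially the same route as the paper: cancel the shared self term, use that multiplication by $1-\mathbf{g}_i\in[0,1]^d$ is nonexpansive, bound the message difference by the normalized unreliable mass $S_U/(S_R+S_U+\epsilon)$ via the triangle inequality, and finish with the same monotonicity argument on $x/(x+y+\epsilon)$. The only differences are the order of the steps (you handle the gate first, the paper last) and the justification that the gate is unchanged: the paper notes that $\mathbf{g}_i$ depends on $\mathbf{h}_i^{(0)}$, $\mathbf{t}_i$, and the mean reliability $\bar r_i$, all of which are shared between the two inputs.
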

Theorem~\ref{thm:reliable_neighbor} formalizes the filtering role of reliability scoring. Reliability-weighted aggregation prevents less reliable neighbors from dominating the completion process, allowing topology to supply complementary semantics without making the completed state overly sensitive to irrelevant or noisy neighborhood information. This result explains why GraphBind estimates neighbor reliability in Eq.~\ref{eq:reliability_score} before performing semantic completion in Eq.~\ref{eq:semantic_completion}.

\subsubsection{How Is Shared Geometry Preserved?}

\begin{theorem}[Shared Geometry Preservation]
\label{thm:shared_geometry}
Let $W_R=\sum_i\sum_{j\in R(i)}r_{ij}>0$. When $i$ is sampled uniformly from $\mathcal{V}$, the latent constraints ensure for any $\kappa\in(0,1)$ that
\begin{equation}
\begin{aligned}
\Pr_i\!\left[
\|D_{\mathrm{sh}}(\mathbf{u}_i)-\mathbf{s}_i\|_2
>\sqrt{\mathcal{L}_{\mathrm{lrec}}/\kappa}
\right]
&\leq\kappa,\\
\left(
\frac{1}{W_R}\sum_i\sum_{j\in R(i)}r_{ij}
\left|\|\Delta\mathbf{z}_{ij}^{\mathrm{anc}}\|_2-\|\Delta\mathbf{s}_{ij}\|_2\right|^2
\right)^{1/2}
&\leq\sqrt{\mathcal{L}_{\mathrm{lcons}}/W_R}.
\end{aligned}
\label{eq:shared_geometry_bound}
\end{equation}
\end{theorem}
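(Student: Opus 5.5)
The plan is to treat the two inequalities of Theorem~\ref{thm:shared_geometry} separately, since they come from the two terms in Eq.~\ref{eq:latent_constraints}. Both follow from elementary averaging arguments once the losses are read in the right normalization; no property of $P_{\mathrm{sh}}$, $D_{\mathrm{sh}}$ or $C_{\mathrm{topo}}$ is needed beyond the values they produce.

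For the first inequality, set $e_i=\|D_{\mathrm{sh}}(\mathbf{u}_i)-\mathbf{s}_i\|_2^2$. The stop-gradient $\operatorname{sg}(\cdot)$ changes only gradients, not forward values, so $\mathcal{L}_{\mathrm{lrec}}=\frac1N\sum_i e_i=\mathbb{E}_i[e_i]$ when $i$ is uniform on $\mathcal{V}$. The event $\|D_{\mathrm{sh}}(\mathbf{u}_i)-\mathbf{s}_i\|_2>\sqrt{\mathcal{L}_{\mathrm{lrec}}/\kappa}$ is the same as $e_i>\mathcal{L}_{\mathrm{lrec}}/\kappa$. If $\mathcal{L}_{\mathrm{lrec}}>0$, Markov's inequality applied to the nonnegative variable $e_i$ gives probability at most $\mathbb{E}[e_i]\kappa/\mathcal{L}_{\mathrm{lrec}}=\kappa$. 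If $\mathcal{L}_{\mathrm{lrec}}=0$, then every $e_i=0$, so the event is empty and the bound holds trivially.

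For the second inequality, use the reverse triangle inequality edge by edge:
\[
\bigl|\|\Delta\mathbf{z}^{\mathrm{anc}}_{ij}\|_2-\|\Delta\mathbf{s}_{ij}\|_2\bigr|\le\|\Delta\mathbf{z}^{\mathrm{anc}}_{ij}-\Delta\mathbf{s}_{ij}\|_2 .
\]
Square this and multiply by $r_{ij}\ge0$; the scores are sigmoid outputs, so they are nonnegative and the inequality direction is preserved. Summing over all $i$ and $j\in R(i)$ gives
\[
\sum_i\sum_{j\in R(i)}r_{ij}\bigl|\|\Delta\mathbf{z}^{\mathrm{anc}}_{ij}\|_2-\|\Delta\mathbf{s}_{ij}\|_2\bigr|^2\le\mathcal{L}_{\mathrm{lcons}} .
\]
Dividing by $W_R>0$ and taking square roots, which is monotone, yields the stated bound. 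The left-hand side can be read as a weighted RMS distortion of pairwise distances, where the weights $r_{ij}/W_R$ form a probability distribution over reliable edges.

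There is no substantive obstacle. The only points needing care are the bookkeeping choices: interpreting $\mathcal{L}_{\mathrm{lrec}}$ through its forward value, which is why $\operatorname{sg}$ is irrelevant; handling the degenerate case $\mathcal{L}_{\mathrm{lrec}}=0$ for Markov's inequality; and noting that $\mathcal{L}_{\mathrm{lcons}}$ is an unnormalized sum while $\mathcal{L}_{\mathrm{lrec}}$ is an average, which is exactly why the normalizer $W_R$ appears in the second bound.
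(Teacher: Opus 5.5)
Your proposal is correct and follows the paper's proof essentially step for step. The first bound applies Markov's inequality to the nonnegative per-node error $\|D_{\mathrm{sh}}(\mathbf{u}_i)-\mathbf{s}_i\|_2^2$, noting that stop-gradient leaves forward values unchanged and treating the case $\mathcal{L}_{\mathrm{lrec}}=0$ separately; the second squares the reverse triangle inequality, weights by the nonnegative scores $r_{ij}$, sums over reliable edges, divides by $W_R$, and takes square roots.
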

The first result converts $\mathcal{L}_{\mathrm{lrec}}$ into a node-level guarantee. A small reconstruction loss limits the fraction of nodes whose projected states cannot recover their completed semantics. The second result bounds the weighted average change in reliable edge lengths after calibration through $\mathcal{L}_{\mathrm{lcons}}$. Complete proofs of both theorems are provided in Appendix~\ref{app:theorem_proof}.

\section{Experiments}
\label{sec:experiments}

In this section, we first describe the experimental setup. Then, we conduct comprehensive evaluations to answer the following research questions: \textbf{Q1}: How does GraphBind perform against representative methods on discriminative and generative tasks? \textbf{Q2}: What are the contributions of the main components in GraphBind? \textbf{Q3}: How robust is GraphBind under varying modality absence patterns? \textbf{Q4}: What are the computational and storage costs of GraphBind?

\subsection{Experimental Setup}
\label{sec:experimental_setup}

\noindent \textbf{Datasets.} We evaluate GraphBind on eight representative MAG datasets from the OpenMAG benchmark~\cite{wan2026openmag} covering e-commerce, social media, video, image-text, and art domains. Specifically, Grocery, Movies, and Toys~\cite{Amazon2018}, together with RedditS~\cite{RedditS}, are used for node classification; DY and Bili\_Dance~\cite{ninerec} for link prediction; Flickr30k~\cite{flickr30k} for graph-to-text generation; and SemArt~\cite{semart} for graph-to-image generation. Detailed dataset statistics are provided in the Appendix~\ref{app:dataset_details}.

\vspace{+0.1cm}
\noindent \textbf{Baselines.} We consider three categories of representative methods: (1) \textit{MAG models}: MMGCN~\cite{mmgcn}, MGAT~\cite{mgat}, LGMRec~\cite{lgmrec}, TMTE~\cite{zhu2026tmte}, and RoleMAG~\cite{zuo2026rolemag}; (2) \textit{GFMs}: GIT~\cite{wang2025git}, GFT~\cite{wang2024gft}, OpenGraph~\cite{xia2024opengraph}, and RiemannGFM~\cite{sun2025riemanngfm}; and (3) \textit{MGFMs}: UniGraph2~\cite{he2025unigraph2} and PLANET~\cite{liu2026planet}. Detailed descriptions and implementation settings for all baselines are provided in the Appendix~\ref{app:compared_baselines}.

\vspace{+0.1cm}
\noindent \textbf{Downstream Tasks.} We evaluate GraphBind on two task families: (1) \textit{Discriminative tasks}, including node classification and link prediction; and (2) \textit{Generative tasks}, including graph-to-text generation (G2Text) and graph-to-image generation (G2Image). Detailed task protocols and evaluation metrics are provided in the Appendix~\ref{app:experiment_setup}.

\begin{table*}
\centering
\caption{Model performance in discriminative tasks. The best results are $\textbf{bold}$. The second-best results are $\underline{underlined}$.}
\label{tab:Q1_1}
\renewcommand{\arraystretch}{1.2}
\resizebox{\textwidth}{!}{
\begin{tabular}{c|cccc|cccc|cccc}
\toprule

\multirow{3}{*}{\centering\textbf{Methods}}
& \multicolumn{4}{c|}{\textbf{Node Classification}}
& \multicolumn{4}{c|}{\textbf{Link Prediction}}
& \multicolumn{4}{c}{\textbf{Few-Shot Node Classification}} \\

\cmidrule{2-13}

& \multicolumn{2}{c}{Toys} 
& \multicolumn{2}{c|}{Grocery}
& \multicolumn{2}{c}{DY} 
& \multicolumn{2}{c|}{Bili\_Dance}
& \multicolumn{2}{c}{Movies-5Way}
& \multicolumn{2}{c}{RedditS-5Way} \\

& Acc & macro-F1
& Acc & macro-F1
& MRR & Hits@3
& MRR & Hits@3
& 5-Shot & 3-Shot
& 5-Shot & 3-Shot \\

\midrule

MMGCN
& $76.54_{\scriptsize \pm 0.25}$ & $73.61_{\scriptsize \pm 0.36}$ & $79.27_{\scriptsize \pm 0.20}$ & $68.78_{\scriptsize \pm 0.43}$
& $66.67_{\scriptsize \pm 0.24}$ & $69.62_{\scriptsize \pm 0.26}$ & $65.47_{\scriptsize \pm 0.57}$ & $70.72_{\scriptsize \pm 0.50}$
& $46.79_{\scriptsize \pm 4.22}$ & $42.42_{\scriptsize \pm 4.15}$ & $\underline{95.64}_{\scriptsize \pm 0.87}$ & $94.83_{\scriptsize \pm 0.71}$ \\

MGAT
& $76.82_{\scriptsize \pm 0.27}$ & $68.23_{\scriptsize \pm 0.52}$ & $80.29_{\scriptsize \pm 0.22}$ & $65.14_{\scriptsize \pm 0.54}$
& $65.45_{\scriptsize \pm 0.25}$ & $68.51_{\scriptsize \pm 0.25}$ & $65.48_{\scriptsize \pm 0.82}$ & $69.95_{\scriptsize \pm 0.36}$
& $47.45_{\scriptsize \pm 3.47}$ & $42.39_{\scriptsize \pm 3.42}$ & $94.48_{\scriptsize \pm 1.06}$ & $93.89_{\scriptsize \pm 0.57}$ \\

LGMRec
& $77.72_{\scriptsize \pm 0.21}$ & $75.60_{\scriptsize \pm 0.08}$ & $81.29_{\scriptsize \pm 0.03}$ & $70.85_{\scriptsize \pm 0.60}$
& $67.89_{\scriptsize \pm 0.12}$ & $72.61_{\scriptsize \pm 0.17}$ & $69.35_{\scriptsize \pm 0.26}$ & $75.73_{\scriptsize \pm 0.52}$
& $\underline{53.85}_{\scriptsize \pm 4.19}$ & $49.49_{\scriptsize \pm 4.30}$ & $95.60_{\scriptsize \pm 0.86}$ & $\underline{94.98}_{\scriptsize \pm 0.62}$ \\

TMTE
& $77.74_{\scriptsize \pm 0.25}$ & $70.60_{\scriptsize \pm 0.46}$ & $78.61_{\scriptsize \pm 0.27}$ & $60.40_{\scriptsize \pm 1.08}$
& $19.71_{\scriptsize \pm 1.80}$ & $19.97_{\scriptsize \pm 1.87}$ & $45.32_{\scriptsize \pm 1.41}$ & $58.83_{\scriptsize \pm 1.41}$
& $50.60_{\scriptsize \pm 4.15}$ & $46.78_{\scriptsize \pm 4.70}$ & $95.30_{\scriptsize \pm 0.90}$ & $94.40_{\scriptsize \pm 0.73}$ \\

RoleMAG
& $\textbf{78.85}_{\scriptsize \pm 0.32}$ & $\textbf{77.23}_{\scriptsize \pm 0.38}$ & $\underline{81.71}_{\scriptsize \pm 0.34}$ & $\underline{74.09}_{\scriptsize \pm 0.38}$
& $69.16_{\scriptsize \pm 1.04}$ & $75.92_{\scriptsize \pm 1.05}$ & $64.11_{\scriptsize \pm 1.25}$ & $75.85_{\scriptsize \pm 1.36}$
& $52.92_{\scriptsize \pm 2.54}$ & $46.72_{\scriptsize \pm 3.97}$ & $95.41_{\scriptsize \pm 0.80}$ & $94.64_{\scriptsize \pm 0.69}$ \\

\midrule

GIT
& $59.88_{\scriptsize \pm 0.43}$ & $55.31_{\scriptsize \pm 1.98}$ & $65.69_{\scriptsize \pm 0.86}$ & $52.07_{\scriptsize \pm 0.81}$
& $61.34_{\scriptsize \pm 0.88}$ & $66.39_{\scriptsize \pm 0.70}$ & $71.63_{\scriptsize \pm 1.00}$ & $74.12_{\scriptsize \pm 0.49}$
& $27.98_{\scriptsize \pm 2.70}$ & $26.58_{\scriptsize \pm 2.56}$ & $50.40_{\scriptsize \pm 4.13}$ & $47.69_{\scriptsize \pm 3.26}$ \\

GFT
& $76.45_{\scriptsize \pm 0.27}$ & $73.78_{\scriptsize \pm 0.35}$ & $80.18_{\scriptsize \pm 0.48}$ & $70.65_{\scriptsize \pm 0.76}$
& $\underline{72.79}_{\scriptsize \pm 0.35}$ & $\textbf{76.30}_{\scriptsize \pm 0.07}$ & $\underline{72.01}_{\scriptsize \pm 0.41}$ & $\underline{77.86}_{\scriptsize \pm 0.31}$
& $44.36_{\scriptsize \pm 3.06}$ & $40.69_{\scriptsize \pm 3.09}$ & $90.58_{\scriptsize \pm 1.60}$ & $89.15_{\scriptsize \pm 1.66}$ \\

OpenGraph
& $54.80_{\scriptsize \pm 0.56}$ & $45.52_{\scriptsize \pm 1.01}$ & $67.97_{\scriptsize \pm 1.03}$ & $49.03_{\scriptsize \pm 2.27}$
& $58.22_{\scriptsize \pm 0.45}$ & $65.31_{\scriptsize \pm 0.44}$ & $54.72_{\scriptsize \pm 0.35}$ & $64.80_{\scriptsize \pm 0.41}$
& $41.67_{\scriptsize \pm 4.59}$ & $38.80_{\scriptsize \pm 3.59}$ & $76.92_{\scriptsize \pm 5.11}$ & $75.25_{\scriptsize \pm 5.01}$ \\

RiemannGFM
& $77.39_{\scriptsize \pm 0.19}$ & $75.74_{\scriptsize \pm 0.19}$
& $81.16_{\scriptsize \pm 0.32}$ & $71.93_{\scriptsize \pm 0.16}$
& $27.30_{\scriptsize \pm 0.75}$ & $31.31_{\scriptsize \pm 1.07}$
& $39.45_{\scriptsize \pm 0.44}$ & $48.20_{\scriptsize \pm 0.82}$
& $43.97_{\scriptsize \pm 4.01}$ & $39.84_{\scriptsize \pm 4.61}$
& $94.83_{\scriptsize \pm 0.72}$ & $94.22_{\scriptsize \pm 0.67}$ \\

\midrule

UniGraph2
& $73.66_{\scriptsize \pm 0.89}$ & $67.25_{\scriptsize \pm 2.24}$
& $70.32_{\scriptsize \pm 2.39}$ & $51.36_{\scriptsize \pm 4.65}$
& $63.87_{\scriptsize \pm 0.83}$ & $70.64_{\scriptsize \pm 0.98}$
& $50.60_{\scriptsize \pm 7.37}$ & $59.01_{\scriptsize \pm 9.25}$
& $52.40_{\scriptsize \pm 2.90}$ & $\underline{49.67}_{\scriptsize \pm 3.39}$
& $84.53_{\scriptsize \pm 5.29}$ & $83.67_{\scriptsize \pm 5.52}$ \\

PLANET
& $77.57_{\scriptsize \pm 0.38}$ & $75.13_{\scriptsize \pm 0.32}$
& $75.54_{\scriptsize \pm 2.26}$ & $59.94_{\scriptsize \pm 5.49}$
& $68.75_{\scriptsize \pm 0.37}$ & $71.78_{\scriptsize \pm 0.27}$
& $70.28_{\scriptsize \pm 0.95}$ & $73.86_{\scriptsize \pm 0.62}$
& $42.23_{\scriptsize \pm 5.93}$ & $44.63_{\scriptsize \pm 2.03}$
& $65.77_{\scriptsize \pm 6.97}$ & $64.33_{\scriptsize \pm 6.34}$ \\

GraphBind
& $\underline{78.08}_{\scriptsize \pm 0.05}$ & $\underline{77.06}_{\scriptsize \pm 0.04}$
& $\textbf{81.86}_{\scriptsize \pm 0.12}$ & $\textbf{74.58}_{\scriptsize \pm 0.14}$
& $\textbf{75.23}_{\scriptsize \pm 0.70}$ & $\underline{76.00}_{\scriptsize \pm 0.41}$
& $\textbf{75.88}_{\scriptsize \pm 0.37}$ & $\textbf{79.36}_{\scriptsize \pm 0.32}$
& $\textbf{56.44}_{\scriptsize \pm 1.69}$ & $\textbf{52.67}_{\scriptsize \pm 1.88}$
& $\textbf{96.37}_{\scriptsize \pm 0.75}$ & $\textbf{95.34}_{\scriptsize \pm 0.67}$ \\

\bottomrule
\end{tabular}
}
\end{table*}

\subsection{Main Results (Answer for Q1)}
\label{sec:main_results}

\noindent \textbf{To answer Q1}, we compare GraphBind with representative MAG models, GFMs, and MGFMs on discriminative and generative tasks in Tables~\ref{tab:Q1_1} and~\ref{tab:Q1_2}.

\vspace{+0.1cm}
\noindent \textbf{Discriminative Tasks.} Under supervised tasks, GraphBind achieves the best results on Grocery and Bili\_Dance and the highest MRR on DY, while ranking second on Toys and in Hits@3 on DY. It also achieves the best performance across all four few-shot settings, improving accuracy by 2.59 percentage points over the strongest baseline on Movies-5Way 5-shot. These results demonstrate that binding structural and semantic information into a global shared space yields task-relevant representations that transfer effectively across supervision regimes. Furthermore, the topology prefix prompter provides lightweight task-specific adaptation to this space, enabling GraphBind to specialize for different discriminative objectives.

\vspace{+0.1cm}
\noindent \textbf{Generative Tasks.} For G2Image, GraphBind achieves 70.01 CLIP-S and 48.91 DINO-S, delivering strong visual semantic alignment, while for G2Text, it obtains 16.43 BLEU-4 and 43.45 ROUGE-L, surpassing the strongest baseline by 3.60 and 4.50 points, respectively. These results indicate that the shared representation provides complementary node semantics and neighborhood information for generation. Furthermore, the G2Text gains indicate that the G2M-Adapter enables generators to effectively use this graph context through lightweight adaptation.

\subsection{Ablation Study (Answer for Q2)}
\label{sec:ablation_study}

\noindent \textbf{To address Q2}, we compare GraphBind with 5 variants to examine its core architecture, which couples reliable neighborhood semantic completion with topology guided shared latent calibration. Since these stages are coupled, each variant replaces them together under the same experimental protocol. We organize the variants into three groups: (1) \textit{Simple Propagation}: We apply standard GNN encoders \textit{GCN}~\cite{GCN}, \textit{GAT}~\cite{GAT}, and \textit{GraphSAGE}~\cite{GraphSAGE} over the original adjacency, retaining neighborhood propagation without neighbor reliability estimation, the self retention gate, or topology conditioned calibration; (2) \textit{Topology Encoding (T.E.)}: We apply a node-wise semantic transformation using an MLP with both node semantics and topology encodings, without neighborhood propagation; and (3) \textit{Node Only (N.O.)}: We remove topology encodings and the entire graph aware path, directly using the initialized node semantics.

\begin{table*}[t!]
    \centering
    \begin{minipage}[t]{0.485\linewidth}
        \centering
        \captionsetup{width=\linewidth}
        \captionof{table}{Model performance in generative tasks. The best results are $\textbf{bold}$. The second results are $\underline{underlined}$.}
        \label{tab:Q1_2}
        \renewcommand{\arraystretch}{1.2}
        \setlength{\tabcolsep}{2.0pt}
        \scriptsize
        \resizebox{\linewidth}{1.9cm}{%
        \begin{tabular}{c|cc|cc}
        \midrule[0.3pt]
        Methods    & \multicolumn{2}{c|}{G2Image} & \multicolumn{2}{c}{G2Text}    \\
          & CLIP-S       & DINO-S   & BLEU-4         & ROUGE-L    \\ \midrule[0.3pt]
        MMGCN      & $69.26_{\scriptsize \pm 0.06}$    & $46.60_{\scriptsize \pm 0.03}$   & $7.36_{\scriptsize \pm 0.12}$      & $30.85_{\scriptsize \pm 0.10}$    \\
        MGAT    & $67.65_{\scriptsize \pm 0.01}$    & $45.23_{\scriptsize \pm 0.11}$   & $7.30_{\scriptsize \pm 0.05}$      & $30.86_{\scriptsize \pm 0.15}$    \\
        LGMRec   & $68.69_{\scriptsize \pm 0.07}$    & $47.25_{\scriptsize \pm 0.05}$   & $7.54_{\scriptsize \pm 0.07}$      & $31.42_{\scriptsize \pm 0.08}$    \\ \midrule[0.3pt]
        UniGraph2      & $\underline{69.78}_{\scriptsize \pm 0.06}$    & $\textbf{49.22}_{\scriptsize \pm 0.08}$   & $5.57_{\scriptsize \pm 0.25}$      & $26.87_{\scriptsize \pm 0.65}$    \\
        PLANET    & $66.76_{\scriptsize \pm 0.03}$    & $45.32_{\scriptsize \pm 0.16}$   & $\underline{12.83}_{\scriptsize \pm 0.73}$      & $\underline{38.95}_{\scriptsize \pm 1.10}$    \\
        GraphBind  & $\textbf{70.01}_{\scriptsize \pm 0.02}$    & $\underline{48.91}_{\scriptsize \pm 0.14}$   & $\textbf{16.43}_{\scriptsize \pm 0.36}$       & $\textbf{43.45}_{\scriptsize \pm 0.10}$    \\ 
        \midrule[0.3pt]
        \end{tabular}
        }
    \end{minipage}
    \hfill
    \begin{minipage}[t]{0.485\linewidth}
        \centering
        \captionsetup{width=\linewidth}
        \captionof{table}{Ablation results on Toys, DY, SemArt, and Flickr30k for the 4 tasks, respectively. The best results are $\textbf{bold}$.}
        \label{tab:Q2}
        \renewcommand{\arraystretch}{1.2}
        \setlength{\tabcolsep}{2.0pt}
        \scriptsize
        \resizebox{\linewidth}{1.9cm}{%
        \begin{tabular}{c|cccc}
        \midrule[0.3pt]
        Variant & \multicolumn{4}{c}{Downstream Tasks} \\
        & NC & LP & G2Image & G2Text \\ \midrule[0.3pt]
        GCN & $75.27_{\scriptsize \pm 0.12}$ & $74.91_{\scriptsize \pm 1.10}$ & $64.48_{\scriptsize \pm 0.62}$ & $13.05_{\scriptsize \pm 0.05}$ \\
        GAT & $75.79_{\scriptsize \pm 0.73}$ & $74.70_{\scriptsize \pm 1.03}$ & $65.66_{\scriptsize \pm 2.37}$ & $12.86_{\scriptsize \pm 0.07}$ \\
        SAGE & $75.83_{\scriptsize \pm 0.66}$ & $75.19_{\scriptsize \pm 1.12}$ & $62.75_{\scriptsize \pm 1.44}$ & $12.91_{\scriptsize \pm 0.03}$ \\
        T.E. & $73.22_{\scriptsize \pm 1.73}$ & $66.39_{\scriptsize \pm 1.29}$ & $64.25_{\scriptsize \pm 1.79}$ & $14.39_{\scriptsize \pm 0.08}$ \\
        N.O. & $73.33_{\scriptsize \pm 1.62}$ & $69.73_{\scriptsize \pm 1.12}$ & $66.65_{\scriptsize \pm 1.72}$ & $14.67_{\scriptsize \pm 0.23}$ \\ \midrule[0.3pt]
        GraphBind & $\textbf{77.06}_{\scriptsize \pm 0.04}$ & $\textbf{75.23}_{\scriptsize \pm 0.70}$ & $\textbf{70.01}_{\scriptsize \pm 0.02}$ & $\textbf{16.43}_{\scriptsize \pm 0.36}$ \\
        \midrule[0.3pt]
        \end{tabular}
        }
    \end{minipage}
\end{table*}

As shown in Table~\ref{tab:Q2}, GraphBind achieves the best result on all four tasks, with larger gains over the strongest variants on the two generative tasks. Among the variants, \textit{Simple Propagation} performs better on Toys and DY, while \textit{N.O.} performs best on SemArt and Flickr30k. This task-dependent ordering indicates that direct propagation over the original adjacency provides useful information for discriminative graph tasks and can introduce unsuitable context for generation. Moreover, \textit{T.E.} performs below \textit{N.O.} on every dataset, showing that adding a node-wise transformation and topology encoding without neighborhood propagation does not improve upon self semantics alone. Overall, the proposed topology-guided encoder outperforms the tested alternatives across all four tasks, supporting its effectiveness in learning shared graph representations.

\subsection{Robustness Analysis (Answer for Q3)}
\label{sec:robustness_analysis}

\begin{figure*}[t]
    \centering
    \includegraphics[width=1.0\linewidth]{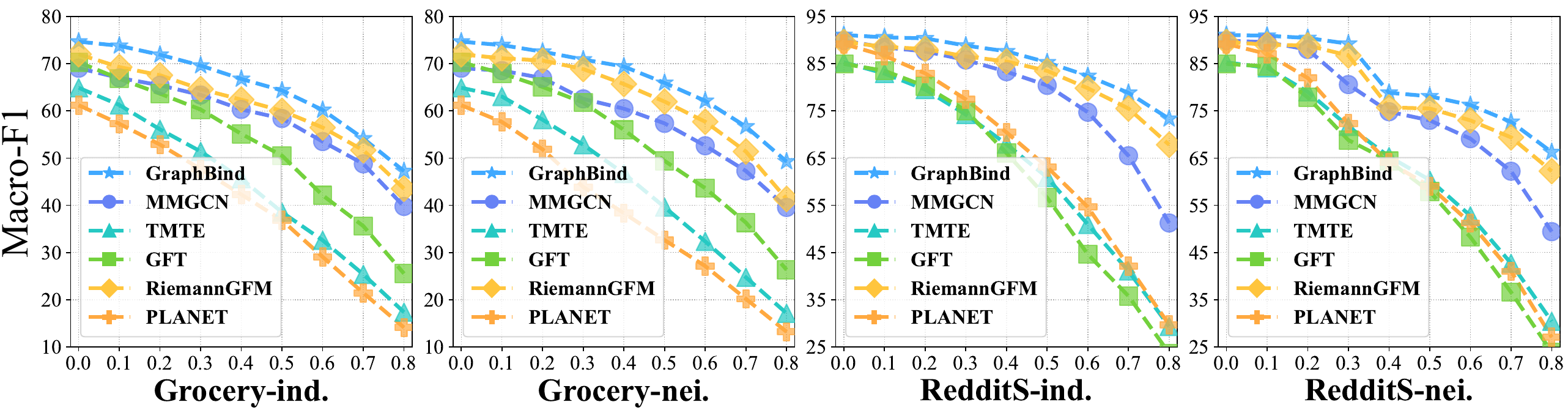}
    \caption{Test macro-F1 under independent and neighborhood modality absence on Grocery and RedditS. Missing ratios range from 0.0 to 0.8 in steps of 0.1.}
    \label{fig:q3}
\end{figure*}

\noindent \textbf{To answer Q3}, we evaluate robustness to modality absence on Grocery and RedditS by masking modality inputs at inference. Specifically, we impose two noise patterns to model distinct failure processes. \textit{Independent} noise masks each node's modalities separately and simulates dispersed random failures. \textit{Neighborhood} noise masks locally concentrated nodes and simulates correlated failures within a community or collection context. We increase the missing ratio from 0.0 to 0.8 in steps of 0.1 and report macro-F1. As shown in Fig.~\ref{fig:q3}, performance declines for all methods as modality absence increases. Additionally, GraphBind exhibits the smoothest degradation and remains the leading method across both datasets and missing patterns. Under neighborhood noise, GraphBind retains leading performance even when correlated modality failures remove locally shared information, showing that it remains effective when modality absence is concentrated in neighboring nodes. Together, the results support the robustness of GraphBind to increasing modality absence under noisy settings.

\subsection{Efficiency Analysis (Answer for Q4)}
\label{sec:efficiency_analysis}

\begin{table}[t]
  \centering
  \caption{Computational efficiency of pretraining and node classification adaptation.}
  \label{tab:Q4}
  \renewcommand{\arraystretch}{1.1}
  \setlength{\tabcolsep}{2.8pt}
  \small
  \resizebox{\textwidth}{!}{%
  \begin{tabular}{c|ccccc}
    \toprule
    \textbf{Methods} & \textbf{PT (s/epoch)} & \textbf{FT (s/epoch)} & \textbf{IT (ms/1k nodes)} & \textbf{Train. Para. (M)} & \textbf{Total Para. (M)} \\
    \midrule
    MMGCN      & --       & $0.0319$ & $12.19$ & $3.3503$   & $3.3503$   \\
    MGAT       & --       & $0.0389$ & $30.63$ & $2.5779$   & $2.5779$   \\
    LGMRec     & --       & $0.0359$ & $36.66$ & $0.2405$   & $0.2405$   \\
    TMTE       & --       & $0.1906$ & $40.66$ & $0.1180$   & $0.1180$   \\
    RoleMAG    & --       & $0.4534$ & $73.42$ & $3.6890$   & $3.6890$   \\
    \midrule
    GIT        & $0.2755$ & $0.0340$ & $15.19$ & $0.9905$   & $0.9905$   \\
    GFT        & $0.2344$ & $0.0893$ & $34.27$ & $2.8273$   & $2.8273$   \\
    OpenGraph  & $0.1343$ & $0.0950$ & $39.64$ & $1.1884$   & $1.1884$   \\
    RiemannGFM & $1.3383$ & $0.1891$ & $33.26$ & $0.2157$   & $0.4717$   \\
    \midrule
    UniGraph2  & $1.0177$ & $0.1596$ & $53.83$ & $20.7237$  & $20.7237$  \\
    PLANET     & $1.6948$ & $3.6859$ & $92.39$ & $221.9436$ & $221.9436$ \\
    GraphBind  & $0.3803$ & $0.4022$ & $59.17$ & $5.3215$   & $23.0692$  \\
    \bottomrule
  \end{tabular}
  }
\end{table}

\noindent \textbf{To answer Q4}, we compare the computational costs of GraphBind with representative baselines in Table~\ref{tab:Q4}. GraphBind's finetuning and inference costs are higher than those of most lightweight MAG baselines, whereas its pretraining is faster than that of both MGFMs and its inference is faster than PLANET; this stage-dependent pattern indicates that GraphBind reduces repeated pretraining cost while incurring additional downstream cost for processing richer graph context. Moreover, among MGFMs, GraphBind requires substantially fewer trainable parameters, updating 5.32M parameters compared with 20.72M for UniGraph2 and 221.94M for PLANET, respectively. It also achieves the lowest proportion of trainable parameters among all compared methods. Combined with its strong classification performance, these findings demonstrate the effectiveness and parameter efficiency of GraphBind's lightweight finetuning design.

\section{Conclusion}
\label{sec:conclusion}

In this paper, we propose GraphBind, a multimodal graph foundation model that uses graph topology to bind heterogeneous modality semantics into a unified representation space. We revisit the role of topology in multimodal binding and show that existing methods largely treat it as a propagation or alignment signal, underexploiting its capacity to organize shared semantics. GraphBind couples topology-guided pretraining with lightweight fine-tuning, learning a shared space that integrates structure and semantics for both discriminative and generative tasks. Extensive experiments against 11 representative baselines show that GraphBind achieves strong performance across both discriminative and generative tasks, demonstrating strong generalization and robustness. We believe this work provides a valuable foundation for topology-driven multimodal graph foundation models and motivates further exploration of scalable graph-semantic binding.

\section*{AI Usage Statement}
No generative AI tools were used for tasks requiring disclosure under the ICLR 2027 policy. The authors take full responsibility for the content of this paper.

\bibliographystyle{iclr2027_conference}
\bibliography{citation}

@article{liu2023towards,
  title = {Towards Graph Foundation Models: A Survey and Beyond},
  author = {Liu, Jiawei and Yang, Cheng and Lu, Zhiyuan and Chen, Junze and Li, Yibo and Zhang, Mengmei and Bai, Ting and Fang, Yuan and Sun, Lichao and Yu, Philip S. and Shi, Chuan},
  journal = {arXiv preprint arXiv:2310.11829},
  year = {2023}
}

@article{wang2025graphfoundationcomprehensive,
  title = {Graph Foundation Models: A Comprehensive Survey},
  author = {Wang, Zehong and Liu, Zheyuan and Ma, Tianyi and Li, Jiazheng and Zhang, Zheyuan and Fu, Xingbo and Li, Yiyang and Yuan, Zhengqing and Song, Wei and Ma, Yijun and Zeng, Qingkai and Chen, Xiusi and Zhao, Jianan and Li, Jundong and Jiang, Meng and Li{\`o}, Pietro and Chawla, Nitesh and Zhang, Chuxu and Ye, Yanfang},
  journal = {arXiv preprint arXiv:2505.15116},
  year = {2025}
}

@inproceedings{zhu2025mosaic,
  title = {Mosaic of Modalities: A Comprehensive Benchmark for Multimodal Graph Learning},
  author = {Zhu, Jing and Zhou, Yuhang and Qian, Shengyi and He, Zhongmou and Zhao, Tong and Shah, Neil and Koutra, Danai},
  booktitle = {Proceedings of the IEEE/CVF Conference on Computer Vision and Pattern Recognition},
  pages = {14215--14224},
  year = {2025},
  doi = {10.1109/CVPR52734.2025.01326}
}

@inproceedings{wan2026openmag,
  title = {{OpenMAG}: A Comprehensive Benchmark for Multimodal-Attributed Graph},
  author = {Wan, Chenxi and Li, Xunkai and Zuo, Yilong and Deng, Haokun and Li, Sihan and Fan, Bowen and Qin, Hongchao and Li, Ronghua and Wang, Guoren},
  booktitle = {Proceedings of the International Conference on Machine Learning},
  year = {2026}
}

@inproceedings{he2025unigraph2,
  title = {{UniGraph2}: Learning a Unified Embedding Space to Bind Multimodal Graphs},
  author = {He, Yufei and Sui, Yuan and He, Xiaoxin and Liu, Yue and Sun, Yifei and Hooi, Bryan},
  booktitle = {Proceedings of the ACM Web Conference},
  pages = {1759--1770},
  doi = {10.1145/3696410.3714818},
  year = {2025}
}

@inproceedings{liu2024oneforall,
  title = {One for All: Towards Training One Graph Model for All Classification Tasks},
  author = {Liu, Hao and Feng, Jiarui and Kong, Lecheng and Liang, Ningyue and Tao, Dacheng and Chen, Yixin and Zhang, Muhan},
  booktitle = {International Conference on Learning Representations},
  year = {2024}
}

@inproceedings{tang2024graphgpt,
  title = {GraphGPT: Graph Instruction Tuning for Large Language Models},
  author = {Tang, Jiabin and Yang, Yuhao and Wei, Wei and Shi, Lei and Su, Lixin and Cheng, Suqi and Yin, Dawei and Huang, Chao},
  booktitle = {Proceedings of the 47th International ACM SIGIR Conference on Research and Development in Information Retrieval},
  year = {2024}
}

@inproceedings{liu2026planet,
  title = {Toward Effective Multimodal Graph Foundation Model: A Divide-and-Conquer Based Approach},
  author = {Liu, Sicheng and Li, Xunkai and Su, Daohan and Zhang, Ru and Qin, Hongchao and Li, Ronghua and Wang, Guoren},
  booktitle = {Proceedings of the International Conference on Machine Learning},
  year = {2026}
}

@inproceedings{sun2026mario,
  title = {{Mario}: Multimodal Graph Reasoning with Large Language Models},
  author = {Sun, Yuanfu and Li, Kang and Guo, Pengkang and Liu, Jiajin and Tan, Qiaoyu},
  booktitle = {Proceedings of the IEEE/CVF Conference on Computer Vision and Pattern Recognition},
  pages = {19219--19228},
  year = {2026}
}

@article{wu2026missingmodality,
  title = {Deep Multimodal Learning with Missing Modality: A Survey},
  author = {Wu, Renjie and Wang, Hu and Chen, Hsiang-Ting and Carneiro, Gustavo},
  journal = {Transactions on Machine Learning Research},
  year = {2026}
}

@inproceedings{wang2023shaspec,
  title = {Multi-Modal Learning with Missing Modality via Shared-Specific Feature Modelling},
  author = {Wang, Hu and Chen, Yuanhong and Ma, Congbo and Avery, Jodie and Hull, Louise and Carneiro, Gustavo},
  booktitle = {Proceedings of the IEEE/CVF Conference on Computer Vision and Pattern Recognition},
  year = {2023}
}

@inproceedings{radford2021clip,
  title = {Learning Transferable Visual Models From Natural Language Supervision},
  author = {Radford, Alec and Kim, Jong Wook and Hallacy, Chris and Ramesh, Aditya and Goh, Gabriel and Agarwal, Sandhini and Sastry, Girish and Askell, Amanda and Mishkin, Pamela and Clark, Jack and Krueger, Gretchen and Sutskever, Ilya},
  booktitle = {Proceedings of the 38th International Conference on Machine Learning},
  pages = {8748--8763},
  year = {2021}
}

@inproceedings{girdhar2023imagebind,
  title = {ImageBind: One Embedding Space To Bind Them All},
  author = {Girdhar, Rohit and El-Nouby, Alaaeldin and Liu, Zhuang and Singh, Mannat and Alwala, Kalyan Vasudev and Joulin, Armand and Misra, Ishan},
  booktitle = {Proceedings of the IEEE/CVF Conference on Computer Vision and Pattern Recognition},
  year = {2023}
}

@inproceedings{liu2023graphprompt,
  title = {GraphPrompt: Unifying Pre-Training and Downstream Tasks for Graph Neural Networks},
  author = {Liu, Zemin and Yu, Xingtong and Fang, Yuan and Zhang, Xinming},
  booktitle = {Proceedings of the ACM Web Conference},
  year = {2023}
}

@inproceedings{sun2023allinone,
  title = {All in One: Multi-Task Prompting for Graph Neural Networks},
  author = {Sun, Xiangguo and Cheng, Hong and Li, Jia and Liu, Bo and Guan, Jihong},
  booktitle = {Proceedings of the ACM SIGKDD Conference on Knowledge Discovery and Data Mining},
  year = {2023}
}

@inproceedings{hou2022graphmae,
  title = {GraphMAE: Self-Supervised Masked Graph Autoencoders},
  author = {Hou, Zhenyu and Liu, Xiao and Cen, Yukuo and Dong, Yuxiao and Yang, Hongxia and Wang, Chunjie and Tang, Jie},
  booktitle = {Proceedings of the ACM SIGKDD Conference on Knowledge Discovery and Data Mining},
  year = {2022}
}

@inproceedings{zhang2021ccassg,
  title = {From Canonical Correlation Analysis to Self-supervised Graph Neural Networks},
  author = {Zhang, Hengrui and Wu, Qitian and Yan, Junchi and Wipf, David and Yu, Philip S.},
  booktitle = {Advances in Neural Information Processing Systems},
  year = {2021}
}

@article{zhou2022coop,
  title = {Learning to Prompt for Vision-Language Models},
  author = {Zhou, Kaiyang and Yang, Jingkang and Loy, Chen Change and Liu, Ziwei},
  journal = {International Journal of Computer Vision},
  year = {2022}
}

@inproceedings{li2023blip2,
  title = {BLIP-2: Bootstrapping Language-Image Pre-training with Frozen Image Encoders and Large Language Models},
  author = {Li, Junnan and Li, Dongxu and Savarese, Silvio and Hoi, Steven},
  booktitle = {Proceedings of the 40th International Conference on Machine Learning},
  year = {2023}
}

@inproceedings{Amazon2018,
  title = {Justifying Recommendations Using Distantly-Labeled Reviews and Fine-Grained Aspects},
  author = {Ni, Jianmo and Li, Jiacheng and McAuley, Julian J.},
  booktitle = {Proceedings of the Conference on Empirical Methods in Natural Language Processing and the International Joint Conference on Natural Language Processing},
  pages = {188--197},
  year = {2019}
}

@inproceedings{hou2024bridging,
  title = {Bridging Language and Items for Retrieval and Recommendation: Benchmarking {LLMs} as Semantic Encoders},
  author = {Hou, Yupeng and Li, Jiacheng and Fu, Xiangjun and He, Zhankui and Yan, An and Chen, Xiusi and McAuley, Julian},
  booktitle = {Proceedings of the 64th Annual Meeting of the Association for Computational Linguistics (Volume 1: Long Papers)},
  pages = {3251--3265},
  doi = {10.18653/v1/2026.acl-long.147},
  year = {2026}
}

@inproceedings{RedditS,
  title = {RedCaps: Web-curated Image-Text Data Created by the People, for the People},
  author = {Desai, Karan and Kaul, Gaurav and Aysola, Zubin and Johnson, Justin},
  booktitle = {Proceedings of the Neural Information Processing Systems Track on Datasets and Benchmarks},
  volume = {1},
  year = {2021}
}

@article{ninerec,
  title = {NineRec: A Benchmark Dataset Suite for Evaluating Transferable Recommendation},
  author = {Zhang, Jiaqi and Cheng, Yu and Ni, Yongxin and Pan, Yunzhu and Yuan, Zheng and Fu, Junchen and Li, Youhua and Wang, Jie and Yuan, Fajie},
  journal = {IEEE Transactions on Pattern Analysis and Machine Intelligence},
  year = {2024}
}

@inproceedings{flickr30k,
  title = {Flickr30k Entities: Collecting Region-to-Phrase Correspondences for Richer Image-to-Sentence Models},
  author = {Plummer, Bryan A. and Wang, Liwei and Cervantes, Chris M. and Caicedo, Juan C. and Hockenmaier, Julia and Lazebnik, Svetlana},
  booktitle = {Proceedings of the IEEE International Conference on Computer Vision},
  pages = {2641--2649},
  year = {2015}
}

@inproceedings{semart,
  title = {How to Read Paintings: Semantic Art Understanding with Multi-modal Retrieval},
  author = {Garcia, Noa and Vogiatzis, George},
  booktitle = {European Conference on Computer Vision Workshops},
  pages = {676--691},
  year = {2018}
}

@inproceedings{mmgcn,
  title = {MMGCN: Multi-modal Graph Convolution Network for Personalized Recommendation of Micro-video},
  author = {Wei, Yinwei and Wang, Xiang and Nie, Liqiang and He, Xiangnan and Hong, Richang and Chua, Tat-Seng},
  booktitle = {Proceedings of the ACM International Conference on Multimedia},
  pages = {1437--1445},
  year = {2019}
}

@article{mgat,
  title = {MGAT: Multimodal Graph Attention Network for Recommendation},
  author = {Tao, Zhulin and Wei, Yinwei and Wang, Xiang and He, Xiangnan and Huang, Xianglin and Chua, Tat-Seng},
  journal = {Information Processing \& Management},
  volume = {57},
  number = {5},
  pages = {102277},
  year = {2020}
}

@inproceedings{lgmrec,
  title = {LGMRec: Local and Global Graph Learning for Multimodal Recommendation},
  author = {Guo, Zhiqiang and Li, Jianjun and Li, Guohui and Wang, Chaoyang and Shi, Si and Ruan, Bin},
  booktitle = {Proceedings of the AAAI Conference on Artificial Intelligence},
  pages = {8454--8462},
  year = {2024}
}

@inproceedings{zhu2026tmte,
  title = {{TMTE}: Effective Multimodal Graph Learning with Task-aware Modality and Topology Co-evolution},
  author = {Zhu, Yinlin and Li, Xunkai and Wu, Di and Luo, Wang and Hu, Miao and Quan, Guocong},
  booktitle = {Proceedings of the ACM International Conference on Multimedia},
  year = {2026}
}

@inproceedings{zuo2026rolemag,
  title = {{RoleMAG}: Learning Neighbor Roles in Multimodal Graphs},
  author = {Zuo, Yilong and Li, Xunkai and Zhang, Zhihan and Li, Ronghua and Wang, Guoren},
  booktitle = {Proceedings of the ACM International Conference on Multimedia},
  year = {2026}
}

@inproceedings{wang2025git,
  title = {Towards Graph Foundation Models: Learning Generalities Across Graphs via Task-Trees},
  author = {Wang, Zehong and Zhang, Zheyuan and Ma, Tianyi and Chawla, Nitesh V. and Zhang, Chuxu and Ye, Yanfang},
  booktitle = {Proceedings of the International Conference on Machine Learning},
  year = {2025}
}

@inproceedings{wang2024gft,
  title = {GFT: Graph Foundation Model with Transferable Tree Vocabulary},
  author = {Wang, Zehong and Zhang, Zheyuan and Chawla, Nitesh and Zhang, Chuxu and Ye, Yanfang},
  booktitle = {Advances in Neural Information Processing Systems},
  year = {2024}
}

@inproceedings{xia2024opengraph,
  title = {OpenGraph: Towards Open Graph Foundation Models},
  author = {Xia, Lianghao and Kao, Ben and Huang, Chao},
  booktitle = {Findings of the Association for Computational Linguistics: EMNLP},
  year = {2024}
}

@inproceedings{sun2025riemanngfm,
  title = {RiemannGFM: Learning a Graph Foundation Model from Riemannian Geometry},
  author = {Sun, Li and Huang, Zhenhao and Zhou, Suyang and Wan, Qiqi and Peng, Hao and Yu, Philip S.},
  booktitle = {Proceedings of the ACM Web Conference},
  pages = {1154--1165},
  year = {2025}
}

@inproceedings{loshchilov2019decoupled,
  title = {Decoupled Weight Decay Regularization},
  author = {Loshchilov, Ilya and Hutter, Frank},
  booktitle = {International Conference on Learning Representations},
  year = {2019}
}

@article{zhang2022opt,
  title = {OPT: Open Pre-trained Transformer Language Models},
  author = {Zhang, Susan and Roller, Stephen and Goyal, Naman and Artetxe, Mikel and Chen, Moya and others},
  journal = {arXiv preprint arXiv:2205.01068},
  year = {2022}
}

@inproceedings{rombach2022high,
  title = {High-Resolution Image Synthesis with Latent Diffusion Models},
  author = {Rombach, Robin and Blattmann, Andreas and Lorenz, Dominik and Esser, Patrick and Ommer, Bjorn},
  booktitle = {Proceedings of the IEEE/CVF Conference on Computer Vision and Pattern Recognition},
  year = {2022}
}

@article{oquab2023dinov2,
  title = {{DINOv2}: Learning Robust Visual Features without Supervision},
  author = {Oquab, Maxime and Darcet, Timoth{\'e}e and others},
  journal = {Transactions on Machine Learning Research},
  year = {2024}
}

@inproceedings{GCN,
  author       = {Thomas N. Kipf and
                  Max Welling},
  title        = {Semi-Supervised Classification with Graph Convolutional Networks},
  booktitle    = {{ICLR} (Poster)},
  publisher    = {OpenReview.net},
  year         = {2017}
}

@inproceedings{
GAT,
title={Graph Attention Networks},
author={Petar Veličković and Guillem Cucurull and Arantxa Casanova and Adriana Romero and Pietro Liò and Yoshua Bengio},
booktitle={International Conference on Learning Representations},
year={2018}
}

@inproceedings{GraphSAGE,
  author       = {William L. Hamilton and
                  Zhitao Ying and
                  Jure Leskovec},
  title        = {Inductive Representation Learning on Large Graphs},
  booktitle    = {{NIPS}},
  pages        = {1024--1034},
  year         = {2017}
}

@article{dwivedi2023benchmarking,
  title = {Benchmarking Graph Neural Networks},
  author = {Dwivedi, Vijay Prakash and Joshi, Chaitanya K. and Luu, Anh Tuan and Laurent, Thomas and Bengio, Yoshua and Bresson, Xavier},
  journal = {Journal of Machine Learning Research},
  volume = {24},
  number = {43},
  pages = {1--48},
  year = {2023}
}

@inproceedings{dwivedi2022lspe,
  title = {Graph Neural Networks with Learnable Structural and Positional Representations},
  author = {Dwivedi, Vijay Prakash and Luu, Anh Tuan and Laurent, Thomas and Bengio, Yoshua and Bresson, Xavier},
  booktitle = {International Conference on Learning Representations},
  year = {2022}
}

@inproceedings{hu2022lora,
  title = {{LoRA}: Low-Rank Adaptation of Large Language Models},
  author = {Hu, Edward J. and Shen, Yelong and Wallis, Phillip and Allen-Zhu, Zeyuan and Li, Yuanzhi and Wang, Shean and Wang, Lu and Chen, Weizhu},
  booktitle = {International Conference on Learning Representations},
  year = {2022}
}

@inproceedings{papineni2002bleu,
  title = {{Bleu}: a Method for Automatic Evaluation of Machine Translation},
  author = {Papineni, Kishore and Roukos, Salim and Ward, Todd and Zhu, Wei-Jing},
  booktitle = {Proceedings of the 40th Annual Meeting of the Association for Computational Linguistics},
  pages = {311--318},
  doi = {10.3115/1073083.1073135},
  year = {2002}
}

@inproceedings{lin2004rouge,
  title = {{ROUGE}: A Package for Automatic Evaluation of Summaries},
  author = {Lin, Chin-Yew},
  booktitle = {Text Summarization Branches Out},
  pages = {74--81},
  year = {2004}
}

@inproceedings{vedantam2015cider,
  title = {{CIDEr}: Consensus-Based Image Description Evaluation},
  author = {Vedantam, Ramakrishna and Zitnick, C. Lawrence and Parikh, Devi},
  booktitle = {Proceedings of the IEEE Conference on Computer Vision and Pattern Recognition},
  pages = {4566--4575},
  year = {2015}
}

\appendix
\section{Appendix}

\subsection{Theorem Proofs}
\label{app:theorem_proof}

\noindent\textbf{Proof of Theorem~\ref{thm:reliable_neighbor}.}
Fix a node $i$ and compare the two completion inputs specified in the theorem. We trace the perturbation through the neighborhood aggregation and the completion gate.

\noindent\textit{Fixed quantities and perturbation source.}
Write $\mathbf{y}_{ij}=\mathbf{W}_n\mathbf{h}_j^{(0)}$ and $\mathbf{y}_{ij}'=\mathbf{W}_n\mathbf{h}_j^{(0)\prime}$ for the transformed messages received from neighbor $j$. The two inputs share the center-node state, topology encoding, and reliability scores. The completion gate depends only on these quantities through the center-node state, topology encoding, and mean neighbor reliability, so it is identical for both inputs. Their normalization constant and aggregation weights are therefore also shared:
\begin{equation}
\begin{aligned}
Z_i
&=
\sum_{k\in\mathcal{N}(i)}r_{ik}+\epsilon,\\
\alpha_{ij}
&=
\alpha_{ij}'
=
\frac{r_{ij}}{Z_i}.
\end{aligned}
\label{eq:app_fixed_weights}
\end{equation}
The reliable messages remain unchanged, so $\mathbf{y}_{ij}=\mathbf{y}_{ij}'$ for $j\in R(i)$. All nonzero message differences are confined to $U(i)$, where $\|\mathbf{y}_{ij}-\mathbf{y}_{ij}'\|_2\le\delta$.

\noindent\textit{Perturbation of the neighborhood message.}
Let $\mathbf{m}_i$ and $\mathbf{m}_i'$ denote the two neighborhood messages in Eq.~\ref{eq:semantic_completion}. Subtracting their weighted sums cancels every reliable-neighbor term. The remaining terms satisfy
\begin{equation}
\begin{aligned}
\|\mathbf{m}_i-\mathbf{m}_i'\|_2
&=
\left\|
\sum_{j\in U(i)}\alpha_{ij}
(\mathbf{y}_{ij}-\mathbf{y}_{ij}')
\right\|_2\\
&\le
\sum_{j\in U(i)}\alpha_{ij}
\|\mathbf{y}_{ij}-\mathbf{y}_{ij}'\|_2\\
&\le
\frac{U_i^w}{R_i^w+U_i^w+\epsilon}\,\delta,
\end{aligned}
\label{eq:app_message_perturbation}
\end{equation}
where $R_i^w=\sum_{j\in R(i)}r_{ij}$ and $U_i^w=\sum_{j\in U(i)}r_{ij}$ are the total reliability masses assigned to the two neighbor groups.

\noindent\textit{Control of the less reliable mass.}
Define $f(R,U)=U/(R+U+\epsilon)$. Its two partial derivatives have fixed signs:
\begin{equation}
\frac{\partial f}{\partial R}
=
-\frac{U}{(R+U+\epsilon)^2}
\le0,
\qquad
\frac{\partial f}{\partial U}
=
\frac{R+\epsilon}{(R+U+\epsilon)^2}
\ge0.
\label{eq:app_mass_monotonicity}
\end{equation}
The score separation in Theorem~\ref{thm:reliable_neighbor} gives $R_i^w\ge |R(i)|r_+$ and $U_i^w\le |U(i)|r_-$. Applying both inequalities according to the monotonicity in Eq.~\ref{eq:app_mass_monotonicity} yields
\begin{equation}
\frac{U_i^w}{R_i^w+U_i^w+\epsilon}
\le
\frac{|U(i)|r_-}
{|R(i)|r_++|U(i)|r_-+\epsilon}.
\label{eq:app_unreliable_mass_bound}
\end{equation}
This bound identifies the exact quantity controlled by reliability scoring: the normalized mass assigned to the perturbed neighbor group decreases as the separation between $r_+$ and $r_-$ increases.

\noindent\textit{Propagation through the completion gate.}
The center-node contribution in Eq.~\ref{eq:semantic_completion} is identical for both inputs and therefore cancels. Since $\mathbf{g}_i\in[0,1]^d$, element-wise multiplication by $1-\mathbf{g}_i$ is nonexpansive under the Euclidean norm. Consequently,
\begin{equation}
\begin{aligned}
\|\mathbf{s}_i-\mathbf{s}_i'\|_2
&=
\|(1-\mathbf{g}_i)\odot(\mathbf{m}_i-\mathbf{m}_i')\|_2\\
&\le
\|\mathbf{m}_i-\mathbf{m}_i'\|_2\\
&\le
\frac{|U(i)|r_-}
{|R(i)|r_++|U(i)|r_-+\epsilon}\,\delta.
\end{aligned}
\label{eq:app_completion_perturbation}
\end{equation}
The final line is exactly Eq.~\ref{eq:reliable_neighbor_bound}. \hfill$\square$

\noindent\textbf{Proof of Theorem~\ref{thm:shared_geometry}.}
The two inequalities in the theorem correspond to the two constraints in Eq.~\ref{eq:latent_constraints}. We establish them separately and then state their joint implication for the calibrated shared space.

\noindent\textit{Semantic recoverability.}
For each node, define the nonnegative reconstruction error
\begin{equation}
E_i
=
\|D_{\mathrm{sh}}(\mathbf{u}_i)-\mathbf{s}_i\|_2^2.
\label{eq:app_node_reconstruction_error}
\end{equation}
Stop-gradient changes the backward derivative while leaving the forward value intact. The first definition in Eq.~\ref{eq:latent_constraints} therefore implies
\begin{equation}
\mathbb{E}_{i\sim\operatorname{Unif}(\mathcal{V})}[E_i]
=
\frac{1}{N}\sum_{i=1}^{N}E_i
=
\mathcal{L}_{\mathrm{lrec}}.
\label{eq:app_mean_reconstruction_error}
\end{equation}
If $\mathcal{L}_{\mathrm{lrec}}=0$, nonnegativity forces every $E_i$ to be zero. Now consider $\mathcal{L}_{\mathrm{lrec}}>0$. For any $\kappa\in(0,1)$, Markov's inequality applied to Eq.~\ref{eq:app_mean_reconstruction_error} gives the complete probability chain
\begin{equation}
\begin{aligned}
\Pr_i\!\left[
\|D_{\mathrm{sh}}(\mathbf{u}_i)-\mathbf{s}_i\|_2
>
\sqrt{\mathcal{L}_{\mathrm{lrec}}/\kappa}
\right]
&=
\Pr_i\!\left[E_i>\mathcal{L}_{\mathrm{lrec}}/\kappa\right]\\
&\le
\frac{\mathbb{E}_i[E_i]}
{\mathcal{L}_{\mathrm{lrec}}/\kappa}\\
&=
\kappa.
\end{aligned}
\label{eq:app_recoverability_probability}
\end{equation}
This proves the node-level recoverability statement.

\noindent\textit{Reliable local geometry.}
For each directed reliable pair $(i,j)$, define its displacement error and edge-length distortion as
\begin{equation}
\mathbf{e}_{ij}
=
\Delta\mathbf{z}_{ij}^{\mathrm{anc}}-\Delta\mathbf{s}_{ij},
\qquad
d_{ij}
=
\left|
\|\Delta\mathbf{z}_{ij}^{\mathrm{anc}}\|_2
-
\|\Delta\mathbf{s}_{ij}\|_2
\right|.
\label{eq:app_geometry_errors}
\end{equation}
The reverse triangle inequality gives $d_{ij}\le\|\mathbf{e}_{ij}\|_2$. Since every reliability score is nonnegative, this pointwise relation can be squared, weighted, and summed without changing its direction:
\begin{equation}
\begin{aligned}
\sum_i\sum_{j\in R(i)}r_{ij}d_{ij}^2
&\le
\sum_i\sum_{j\in R(i)}r_{ij}\|\mathbf{e}_{ij}\|_2^2\\
&=
\sum_i\sum_{j\in R(i)}r_{ij}
\|\Delta\mathbf{z}_{ij}^{\mathrm{anc}}-\Delta\mathbf{s}_{ij}\|_2^2\\
&=
\mathcal{L}_{\mathrm{lcons}}.
\end{aligned}
\label{eq:app_weighted_geometry_error}
\end{equation}
The normalization constant $W_R=\sum_i\sum_{j\in R(i)}r_{ij}$ is strictly positive by assumption. Dividing Eq.~\ref{eq:app_weighted_geometry_error} by $W_R$ and taking square roots gives
\begin{equation}
\left(
\frac{1}{W_R}
\sum_i\sum_{j\in R(i)}r_{ij}
\left|
\|\Delta\mathbf{z}_{ij}^{\mathrm{anc}}\|_2
-
\|\Delta\mathbf{s}_{ij}\|_2
\right|^2
\right)^{1/2}
\le
\sqrt{\frac{\mathcal{L}_{\mathrm{lcons}}}{W_R}}.
\label{eq:app_geometry_rms_bound}
\end{equation}
This is the second inequality in Eq.~\ref{eq:shared_geometry_bound}. Together with Eq.~\ref{eq:app_recoverability_probability}, it shows that the two latent constraints control complementary properties: semantic decoding error at the node level and metric distortion over reliable local relations. \hfill$\square$

\subsection{Method Details}
\label{app:detailed_theoretical_analysis}

\noindent\textbf{Modality Absence Simulation.} For Independent Absence, we sample the simulated absence indicator for each observed node and modality pair as

\begin{equation}
s_i^{(m)}
\sim
\operatorname{Bernoulli}
\left(
\rho_m(e)
\right),
\quad
a_i^{(m)}=1,
\label{eq:app_independent_absence}
\end{equation}

where $\rho_m(e)$ denotes the absence probability of modality $m$ at epoch $e$. We then sample an additional reconstruction mask only from encoder-visible modalities,

\begin{equation}
r_i^{(m)}
\sim
\operatorname{Bernoulli}
\left(
\mu_m
\right),
\quad
a_i^{(m)}=1,
\quad
s_i^{(m)}=0.
\label{eq:app_reconstruction_mask}
\end{equation}

The encoder receives the effective availability in Eq.~\ref{eq:effective_availability}. The supervision set for modality reconstruction contains both sources of hidden observations,

\begin{equation}
\mathcal{T}^{(m)}
=
\left\{
i:
a_i^{(m)}=1,
\quad
s_i^{(m)}=1
\ \mathrm{or}\ 
r_i^{(m)}=1
\right\}.
\label{eq:app_reconstruction_targets}
\end{equation}

Thus, simulated absence evaluates recovery from the remaining modalities and topology, while the reconstruction mask supplies additional self-supervised targets without exposing their features to the encoder.

For Neighborhood Correlated Absence, we first sample seed nodes $\mathcal{C}^{(m)}$ and mask the neighborhoods within $K$ hops of the seeds.

\begin{equation}
s_i^{(m)}
=
\mathbb{I}
\left[
\min_{v_c\in\mathcal{C}^{(m)}}
\operatorname{dist}_{\mathcal{G}}(v_i,v_c)
\le
K
\right].
\label{eq:app_neighborhood_absence}
\end{equation}

The number of seeds is selected to match the target missing ratio. For Property Dependent Absence, a nonnegative node property $g_i$ determines the absence probability.

\begin{equation}
\Pr
\left(
s_i^{(m)}=1
\mid
g_i
\right)
=
\min
\left(
1,
c_m(e)\psi_m(g_i)
\right),
\label{eq:app_conditional_absence}
\end{equation}

where $c_m(e)$ is chosen to attain the desired average missing ratio and $\psi_m$ is a nonnegative scaling function. At inference, we remove the simulated indicators and use the observed availability $a_i^{(m)}$ directly.

\noindent\textbf{Reliable Neighborhood Semantic Completion.}
The topology input $\mathbf{p}_i$ combines Laplacian positional features~\cite{dwivedi2023benchmarking} and random walk~\cite{dwivedi2022lspe} return probabilities computed on the symmetric binary graph. The former use the eigenvectors associated with the smallest normalized Laplacian eigenvalues, excluding the first eigenvector, while the latter use diagonal entries of successive powers of the transition matrix. The encoder $f_{\mathrm{topo}}$ maps these features through two linear layers with LayerNorm and GELU between them.

In Eq.~\ref{eq:reliability_score}, $\Phi$ concatenates the two semantic states, the two topology encodings, their respective element-wise products, and $\omega_{ij}$, then applies a linear projection and GELU. The final affine scoring layer includes the bias $b_r$. For the row-normalized transition matrix $\mathbf{T}$, the diffusion matrix is $\mathbf{W}=\eta\sum_{k=0}^{L}(1-\eta)^k\mathbf{T}^k$, where $\eta$ is the restart coefficient and $L$ is the diffusion depth. The weight for $j\rightarrow i$ is $\omega_{ij}=W_{ji}/W_{jj}$, bounded below by a small positive constant. This normalizes the source-to-target connection by the source's self-diffusion. Protocols that omit diffusion use unit edge weights. In Eq.~\ref{eq:semantic_completion}, all incoming neighbors contribute through normalized reliability scores. The retention gate is a sigmoid applied to an affine projection of $[\mathbf{h}_i^{(0)};\mathbf{t}_i;\bar r_i]$, where $\bar r_i$ is their mean reliability, yielding a separate balance for each feature dimension.

\noindent\textbf{Topology-Guided Shared Latent Calibration.}
In Eq.~\ref{eq:topology_calibration}, $P_{\mathrm{sh}}$ combines a two-layer projection with a gated linear semantic residual. The projection uses LayerNorm and GELU between its linear layers, and a sigmoid gate computed from $\mathbf{s}_i$ controls the residual. The topology conditioner uses two linear layers separated by GELU to predict scale and shift offsets. The scale is parameterized as $\boldsymbol{\gamma}_i=\mathbf{1}+\Delta\boldsymbol{\gamma}_i$, and the output layer starts at zero, so the affine calibration initially preserves $\mathbf{u}_i$. The bypass gate $\mathbf{b}_i$ is a sigmoid applied to an affine projection of $[\mathbf{s}_i;\mathbf{t}_i]$, retaining completed semantics directly in the final representation.

For Eq.~\ref{eq:latent_constraints}, $D_{\mathrm{sh}}$ is a two-layer decoder with GELU that reconstructs $\mathbf{s}_i$ from the shared state before topology calibration. Both constraints detach their completed-semantic targets. To construct $R(i)$, we rank incoming neighbors by reliability, retain a bounded number of the highest-scoring neighbors, and apply a reliability threshold. If none passes, the highest-scoring neighbor is retained with its continuous reliability weight. The implementation averages reconstruction errors over nodes and coordinates, and consistency errors over selected edges and coordinates. These are normalized versions of the losses in Eq.~\ref{eq:latent_constraints}, whose unnormalized notation is used in the theoretical bounds.

\noindent\textbf{Pretraining Objectives.}

\noindent\textit{Contrastive Alignment.}
GraphBind uses contrastive learning~\cite{radford2021clip,girdhar2023imagebind} to bind each shared representation to node-local modality semantics and structurally related semantics. Multiple positives provide complementary semantic targets when a node's own modalities are incomplete. For a pretraining batch $\mathcal{B}$, let $\mathcal{V}_m=\{i\in\mathcal{B}:a_i^{(m)}=1\}$ collect nodes whose original modality $m$ is available. We compute the cosine similarity $q_{ij}^{(m)}$ between anchor $i$ and the modality representation of node $j$ from the original, unmasked input stream.

Besides the same node, the positive set $\mathcal{P}_{i,+}^{(m)}$ contains candidates with sufficiently high diffusion weight and cross-modal similarity, ranked by the product of their diffusion weight and nonnegative similarity. Here, $d_{ij}=W_{ij}/W_{ii}$ measures diffusion from anchor $i$ to candidate $j$. Hard negatives $\mathcal{H}_i^{(m)}$ have high similarity and low diffusion weight, and are ranked by $q_{ij}^{(m)}-d_{ij}$. Each selection retains a bounded number of candidates. These hard negatives emphasize semantic distinctions that graph connections help resolve. Feature dropout additionally provides an augmented modality view, whose dot product with the normalized anchor is denoted by $\widetilde q_{ij}^{(m)}$. The objective is
\begin{equation}
\begin{aligned}
A_i^{(m)}
&=e^{q_{ii}^{(m)}/\tau}+e^{\widetilde q_{ii}^{(m)}/\tau}
+\sum_{j\in\mathcal{P}_{i,+}^{(m)}}e^{q_{ij}^{(m)}/\tau},\\
B_i^{(m)}
&=\sum_{j\in\mathcal{V}_m}d_{ij}e^{q_{ij}^{(m)}/\tau}
+\sum_{j\in\mathcal{V}_m}e^{\widetilde q_{ij}^{(m)}/\tau}
+\lambda_{\mathrm{hard}}\sum_{j\in\mathcal{H}_i^{(m)}}e^{q_{ij}^{(m)}/\tau},\\
\mathcal{L}_{\mathrm{ctr}}^{(m)}
&=-\frac{1}{|\mathcal{V}_m|}\sum_{i\in\mathcal{V}_m}
\log\frac{A_i^{(m)}}{B_i^{(m)}}.
\end{aligned}
\label{eq:contrastive_loss}
\end{equation}
Here, $\tau>0$ is the temperature and $\lambda_{\mathrm{hard}}$ controls hard-negative emphasis. The denominator is evaluated over all available candidates in chunks. Disabled augmentation or mining removes the corresponding terms, and unavailable diffusion weights are replaced by one. Summing over modalities gives $\mathcal{L}_{\mathrm{ctr}}=\sum_m\mathcal{L}_{\mathrm{ctr}}^{(m)}$. Aggregating positive scores allows their contributions to follow semantic agreement, while hard-negative emphasis encourages finer distinctions within the shared space.

\noindent\textit{Masked Modality Reconstruction.}
Contrastive alignment primarily preserves relations among samples and can omit modality details needed by generative tasks. Following masked graph autoencoding~\cite{hou2022graphmae}, GraphBind reconstructs the hidden observations indexed by $\mathcal{T}^{(m)}$. We use the scaled cosine error
\begin{equation}
\delta_{\mathrm{sce}}(\mathbf{u},\mathbf{v})
=\left(1-
\frac{\mathbf{u}^{\top}\mathbf{v}}
{\max(\|\mathbf{u}\|_2,\epsilon)\max(\|\mathbf{v}\|_2,\epsilon)}
\right)^{\gamma},\qquad \gamma\ge1.
\label{eq:app_scaled_cosine_error}
\end{equation}
Cosine reconstruction matches the angular geometry of the modality features and removes feature magnitude from the reconstruction error. This focuses supervision on semantic direction, which MSE also couples to feature norms. The implementation uses $\gamma=1$. The masked reconstruction loss is
\begin{equation}
\mathcal{L}_{\mathrm{rec}}^{(m)}
=\frac{1}{|\mathcal{T}^{(m)}|}
\sum_{i\in\mathcal{T}^{(m)}}
\delta_{\mathrm{sce}}\left(D_m(\mathbf{z}_i^{\mathrm{anc}}),\mathbf{x}_i^{(m)}\right),
\label{eq:reconstruction_loss}
\end{equation}
where $D_m$ is the modality decoder and $\mathcal{L}_{\mathrm{rec}}=\sum_m\mathcal{L}_{\mathrm{rec}}^{(m)}$. Reconstructing hidden attributes encourages the shared representation to retain modality information recovered through the semantic completion path.

\noindent\textit{Latent Decorrelation.}
Alignment and reconstruction can encode correlated information repeatedly across latent dimensions, reducing the effective capacity of the shared space. Let $\bar{\mathbf{Z}}$ and $\bar{\mathbf{V}}^{(m)}$ denote column-wise centered and standardized anchor and modality matrices. Their empirical correlation matrices are
\begin{equation}
\mathbf{C}^{(a)}=\frac{1}{|\mathcal{B}|}\bar{\mathbf{Z}}^{\top}\bar{\mathbf{Z}},
\qquad
\mathbf{C}^{(m)}=\frac{1}{|\mathcal{V}_m|}(\bar{\mathbf{V}}^{(m)})^{\top}\bar{\mathbf{V}}^{(m)}.
\label{eq:app_correlation_matrices}
\end{equation}
Following graph decorrelation learning~\cite{zhang2021ccassg}, we penalize their off-diagonal entries:
\begin{equation}
\mathcal{L}_{\mathrm{decor}}
=\frac{1}{d(d-1)}\left[
\sum_{u\ne v}(\mathbf{C}_{uv}^{(a)})^2
+\sum_{m\in\mathcal{M}}\sum_{u\ne v}(\mathbf{C}_{uv}^{(m)})^2\right],
\label{eq:decorrelation_loss}
\end{equation}
where $d$ is the representation dimension. Penalizing correlation across distinct coordinates encourages complementary features for multimodal binding.

\noindent\textit{Total Objective.}
We jointly optimize semantic alignment, modality recovery, and shared-space regularization through
\begin{equation}
\begin{aligned}
\mathcal{L}_{\mathrm{pre}}
&=\lambda_{\mathrm{ctr}}\mathcal{L}_{\mathrm{ctr}}
+\lambda_{\mathrm{rec}}\mathcal{L}_{\mathrm{rec}}
+\lambda_{\mathrm{dec}}\mathcal{L}_{\mathrm{decor}}\\
&\quad+\lambda_{\mathrm{lr}}\mathcal{L}_{\mathrm{lrec}}
+\lambda_{\mathrm{lc}}\mathcal{L}_{\mathrm{lcons}},
\end{aligned}
\label{eq:pretraining_objective}
\end{equation}
where the five coefficients balance the objectives. Together, these terms train the shared representation to support cross-modal alignment and missing-modality completion while preserving useful semantic variation.

\noindent\textbf{Topology Prefix Prompter.}
In Eq.~\ref{eq:prompt_attention}, $\operatorname{Attn}$ is a scaled dot product between a projected query and projected prompt keys. The query combines the shared node representation with a linearly projected task token, and the values are the original prompt vectors. The selected context, shared representation, and projected task token are added and normalized before a two-layer MLP with GELU produces the residual update. A feature-wise task gate, obtained by applying $\tanh$ to an affine projection of the projected task token, scales this update before it is added to $\mathbf{z}_i^{\mathrm{anc}}$. The residual output layer is initialized to zero, giving an identity mapping at initialization. When no task token is supplied, the query uses a zero condition and the residual update is applied directly.

\noindent\textbf{G2M-Adapter.}
For Eq.~\ref{eq:topology_condition_sequence}, the context sequence places the target node first, followed by its incoming one-hop neighbors from the task metadata. Repeated entries and the target itself are removed from the neighbor list, which is truncated in metadata order. Unused slots contain zero vectors and are masked as attention keys.

For text generation, $F_s$ projects the shared representations and applies local message passing with bidirectional links from the target to its neighbors and between consecutive retained neighbors. Each block combines transformed self states and mean neighbor states through GELU, a residual connection, and LayerNorm. Slot embeddings distinguish context positions. Learnable queries aggregate these states through the gated attention in Eq.~\ref{eq:topology_condition_sequence}, and $O_s$ applies LayerNorm. The resulting tokens are prepended to the language input embeddings, with prefix length determined by the query bank.

For image generation, $F_s$ is a linear projection and $\mathbf{Q}_s=\mathbf{E}_s$ gives self-attention over context nodes. After the gated residual update, $O_s$ adds slot embeddings, normalizes and projects the states, and zeros invalid slots. A final linear projection expands each slot into generator tokens followed by LayerNorm, retaining the padded slots in the output sequence. These tokens are concatenated with text embeddings and supplied to the diffusion U-Net through its existing cross-attention interface.

\subsection{Pseudocode of \textsc{GraphBind}}
\label{app:pseudocode}

We provide the overall procedure for GraphBind in Algorithm~\ref{alg:graphbind}.

\begin{algorithm}[htbp]\fontsize{8pt}{4pt}\selectfont
\DontPrintSemicolon
\SetAlgoLined
\caption{Overall Procedure of GraphBind}
\label{alg:graphbind}

\KwInput{Pretraining graphs $\mathbb{G}_{\mathrm{pre}}$ with modality availability; maximum epochs $E$; downstream graph and training data for task $s$.}
\KwOutput{Pretrained encoder $f_{\Theta}$ and adapted task model.}

\tcc{Initialization}
initialize model parameters and extract features with frozen modality encoders; \\
prepare topology features $\mathbf{p}_i$ and diffusion weights; \\

\For{$e=1$ to $E$}{
    \ForEach{graph batch from $\mathbb{G}_{\mathrm{pre}}$}{
        \tcc{Step 1: Modality Absence Simulation}
        sample absence and reconstruction masks, then set
        $\tilde a_i^{(m)} \leftarrow a_i^{(m)}(1-s_i^{(m)})(1-r_i^{(m)})$; \\

        \tcc{Step 2: Self-Semantic Initialization and Neighborhood Completion}
        initialize $\mathbf{h}_i^{(0)}$ from visible modalities via Eq.~(\ref{eq:self_initialization}); \\
        compute $\mathbf{t}_i \leftarrow f_{\mathrm{topo}}(\mathbf{p}_i)$ and neighbor reliability $r_{ij}$ via Eq.~(\ref{eq:reliability_score}); \\
        aggregate incoming neighbors and obtain $\mathbf{s}_i$ via Eq.~(\ref{eq:semantic_completion}); \\

        \tcc{Step 3: Topology-Guided Shared Latent Calibration}
        project $\mathbf{u}_i \leftarrow P_{\mathrm{sh}}(\mathbf{s}_i)$ and calibrate $\mathbf{z}_i^{\mathrm{anc}}$ via Eq.~(\ref{eq:topology_calibration}); \\
        select reliable neighbors $R(i)$ and compute latent constraints via Eq.~(\ref{eq:latent_constraints}); \\
        optimize $\mathcal{L}_{\mathrm{pre}}$ with original modality targets via Eq.~(\ref{eq:pretraining_objective}); \\
    }
}

\tcc{Step 4: Unified Downstream Adaptation}
initialize the task model from $f_{\Theta}$ and its task interface; \\
\For{each task training batch}{
    encode task-visible modalities and graph structure to obtain $\mathbf{z}_i^{\mathrm{anc}}$; \\
    \eIf{$s$ is node classification or link prediction}{
        select prompt context via Eq.~(\ref{eq:prompt_attention}) and apply the residual prompt update; \\
        predict with the downstream head; \\
    }{
        gather the target node and one-hop neighbors into $\mathbf{Z}_{\mathrm{topo}}$; \\
        map $\mathbf{Z}_{\mathrm{topo}} \rightarrow \bar{\mathbf{Z}}_s$ via Eq.~(\ref{eq:topology_condition_sequence}); \\
        supply $\bar{\mathbf{Z}}_s$ as language prefixes or diffusion context tokens; \\
    }
    optimize the task objective following Appendix~\ref{app:experiment_setup}; \\
}

\Return{pretrained encoder $f_{\Theta}$ and adapted task model};
\end{algorithm}

\subsection{Dataset Details}
\label{app:dataset_details}

In this section, we provide detailed statics of the datasets in Table~\ref{tab:dataset_statistics}. Texual descriptions are shown as follows.

\begingroup
\setlength{\intextsep}{4pt}
\captionsetup{skip=3pt}
\begin{table}[H]
\centering
\caption{Statistics of the datasets used in GraphBind.}
\label{tab:dataset_statistics}
\fontsize{9pt}{10.5pt}\selectfont
\renewcommand{\arraystretch}{1.0}
\setlength{\tabcolsep}{10pt}
\begin{tabular*}{\textwidth}{@{\extracolsep{\fill}}cccccc}
\toprule
\textbf{Dataset} & \textbf{Modalities} & \textbf{Nodes} & \textbf{Edges} & \textbf{Classes} & \textbf{Domain} \\
\midrule
Grocery       & Text, Visual & 17,074 & 171,340   & 20 & E-Commerce \\
RedditS       & Text, Visual & 15,894 & 566,160   & 20 & Social Media \\
Movies        & Text, Visual & 16,672 & 218,390   & 20 & E-Commerce \\
Toys          & Text, Visual & 20,695 & 126,886   & 18 & E-Commerce \\
Ele-fashion   & Text, Visual & 97,766 & 199,602   & 12 & E-Commerce \\
Sports        & Text, Visual & 50,250 & 356,202   & -- & E-Commerce \\
Bili\_Dance  & Text, Visual & 2,307  & 9,127     & -- & Video Recommendation \\
DY            & Text, Visual & 8,299  & 35,627    & -- & Video Recommendation \\
SemArt        & Text, Visual & 21,382 & 1,216,432 & -- & Art Network \\
Flickr30k     & Text, Visual & 31,783 & 181,151   & -- & Image Network \\
\bottomrule
\end{tabular*}
\end{table}
\endgroup

\noindent\textbf{Grocery, Movies, and Toys.} These datasets are extracted from the Amazon product review corpus~\cite{Amazon2018}. Nodes represent products, and edges encode purchase relations derived from the also bought metadata. Textual attributes include product titles, descriptions, and reviews, while visual attributes are product images. Grocery and Toys are used for node classification, and Movies is used for few shot node classification. Labels are given by product categories or genres.

\noindent\textbf{RedditS.} RedditS is derived from the RedCaps collection of Reddit posts with images and text~\cite{RedditS}. We follow the OpenMAG preprocessing protocol to construct its post graph~\cite{wan2026openmag}. Post titles and bodies provide textual attributes, and embedded images provide visual attributes. We use RedditS for few shot node classification, where labels identify subreddit communities.

\noindent\textbf{DY and Bili\_Dance.} These datasets are drawn from the NineRec benchmark~\cite{ninerec}. Nodes represent short videos from Douyin and Bilibili, respectively, and edges reflect user co-viewing or co-interaction behavior. Textual attributes are obtained from titles, descriptions, or hashtags, and visual attributes from video covers or sampled frames. Both datasets are used for link prediction.

\noindent\textbf{Ele-fashion.} Ele-fashion is a heterogeneous Amazon graph that merges electronics and fashion products~\cite{Amazon2018,hou2024bridging}. Nodes represent products from the two categories, and edges encode cross-category co-purchases. Textual attributes combine technical specifications with style descriptions, while visual attributes are product images. We use Ele-fashion for cross-domain pretraining.

\noindent\textbf{Sports.} Sports is drawn from Amazon's Sports and Outdoors category~\cite{Amazon2018,hou2024bridging}. Nodes represent sports equipment and accessories, and edges capture complementary or co-purchase relations. Textual attributes comprise product specifications, and visual attributes are product images. We use Sports for cross-domain pretraining.

\noindent\textbf{Flickr30k.} Flickr30k Entities provides paired images and sentences with grounding annotations between regions and phrases~\cite{flickr30k}. Following the OpenMAG construction~\cite{wan2026openmag}, we link image regions and caption phrases through their grounding relations and evaluate G2Text generation.

\noindent\textbf{SemArt.} SemArt is a fine-art collection introduced for multimodal retrieval and semantic art understanding~\cite{semart}. Following the OpenMAG construction~\cite{wan2026openmag}, nodes represent paintings and edges connect artworks sharing metadata such as artist, period, or school. Textual attributes comprise titles and expert descriptions, while visual attributes are the corresponding painting images. We use SemArt for G2Image generation.

\subsection{Compared Baselines}
\label{app:compared_baselines}

We compare GraphBind with representative methods from MAG models, GFMs, and MGFMs. Mario~\cite{sun2026mario} focuses on adapting LLMs through modality-adaptive graph instruction tuning, with downstream evaluation centered on node classification and link prediction. We leave Mario outside the experimental comparison given its emphasis on LLM-based graph reasoning, while our study mainly focuses on learning shared graph representations and adapting them to both discriminative and generative tasks.

\noindent\textbf{MMGCN}~\cite{mmgcn} is a multimodal graph convolution model for micro video recommendation. It builds a separate user and item graph for each modality, propagates information independently within each graph, and combines the resulting user and item representations with intrinsic features and shared ID embeddings for personalized ranking.

\noindent\textbf{MGAT}~\cite{mgat} is a multimodal graph attention model for recommendation. It applies gated attention on parallel modality-specific interaction graphs, learns adaptive weights for neighboring items and modalities, and fuses the resulting signals to capture fine-grained user preferences under noisy or conflicting modalities.

\noindent\textbf{LGMRec}~\cite{lgmrec} combines local graph learning with a global hypergraph module to model collaborative and multimodal user interests for recommendation. Its local branch separates collaborative and modality-specific signals, while the hypergraph branch captures higher-order dependencies among users and items before the two representations are integrated.

\noindent\textbf{TMTE}~\cite{zhu2026tmte} jointly and iteratively evolves graph topology and multimodal representations under downstream task supervision, forming a task-aware co-evolution framework for multimodal graph learning. It learns multi-perspective similarity relations to refine the adjacency structure and uses the evolved topology for smoothness-regularized multimodal fusion and cross-modal alignment.

\noindent\textbf{RoleMAG}~\cite{zuo2026rolemag} routes neighbors according to learned roles and propagates multimodal information through role-specific experts, targeting robust representation learning on multimodal graphs. Its router decomposes each edge into shared, complementary, and heterophilous relations, and the corresponding experts perform smoothing, cross-modal completion, and signed structural filtering before gated fusion.

\noindent\textbf{GIT}~\cite{wang2025git} represents graph computation patterns as task trees and pretrains a shared graph model through task-tree reconstruction. A task-tree representation unifies node-, edge-, and graph-level objectives, after which the pretrained encoder supports supervised, few-shot, zero-shot, and in-context adaptation across graph domains.

\noindent\textbf{GFT}~\cite{wang2024gft} learns a transferable vocabulary of computation trees with vector quantization and uses it to unify graph pretraining across domains and downstream fine tuning. During pretraining, graph computation trees are reconstructed into a shared codebook. During adaptation, the resulting tree tokens provide a common interface for node, link, and graph prediction.

\noindent\textbf{OpenGraph}~\cite{xia2024opengraph} distills graph generalization from large language models through a unified graph tokenizer, a scalable graph Transformer, and LLM-assisted graph data augmentation. The tokenizer converts heterogeneous graph neighborhoods into transferable tokens, while the Transformer models global node dependencies for zero-shot transfer across unseen graph domains and tasks.

\noindent\textbf{RiemannGFM}~\cite{sun2025riemanngfm} formulates graph representation learning in a Riemannian geometric space and pretrains a transferable graph foundation model for cross-domain node and link prediction. Its geometric encoder and task-agnostic pretraining objective are designed to accommodate graphs with diverse structural properties while retaining transferable relational geometry.

\noindent\textbf{UniGraph2}~\cite{he2025unigraph2} aligns text and image features with graph structure through frozen modality encoders, a mixture-of-experts alignment module, and a universal GNN. The experts select modality-specific alignment functions before topology-aware message passing, producing a unified embedding space that transfers to node classification and link prediction.

\noindent\textbf{PLANET}~\cite{liu2026planet} is a multimodal graph foundation model based on a divide-and-conquer design that separately models graph structure and modality semantics before integrating them for transfer across discriminative and generative tasks. Its modular decomposition isolates graph encoding, multimodal representation learning, and task adaptation, enabling a shared foundation model to support heterogeneous downstream objectives.

\subsection{Experiment Setup}
\label{app:experiment_setup}

\noindent\textbf{Experimental Environment.} All experiments are implemented in Python 3.10.18 with PyTorch 2.8 and CUDA 12.9. We use a workstation equipped with Intel Xeon Scalable processors, 256~GB system memory, and NVIDIA RTX 6000 Ada GPUs with 48~GB VRAM each.

\noindent\textbf{Pretraining Data and Sampling.} We pretrain on Toys, Grocery, Movies, RedditS, Ele-fashion, and Sports. Each dataset contributes one graph update per epoch, giving every dataset one sixth of the updates. Ele-fashion and Sports use sampled subgraphs to control memory use. Textual and visual node attributes are encoded as 768 dimensional vectors by CLIP ViT L/14~\cite{radford2021clip} and used as the two modality inputs. DY, Bili\_Dance, Flickr30k, and SemArt are reserved for adaptation and evaluation. GraphBind uses a 768 dimensional shared space, two topology GNN layers, a 32 dimensional topology input, eight attention heads, and an eight token topology prompt. Independent absence uses a 10\% missing probability for each modality. Neighborhood correlated absence masks one hop neighborhoods of seed nodes with a 3\% seed ratio. Property dependent absence scales the missing probability by node properties and sets the average missing ratio to 10\%~\cite{wu2026missingmodality}. Modalities visible to the encoder are additionally masked at 15\% to provide reconstruction targets.

\noindent\textbf{Pretraining Optimization.} We optimize all GraphBind parameters with AdamW~\cite{loshchilov2019decoupled} for at most 500 epochs using learning rate $1\times10^{-4}$, weight decay $1\times10^{-5}$, gradient clipping at norm 1.0, and cosine decay to $1\times10^{-6}$ with 20 warmup epochs. Early stopping is enabled after epoch 300 with patience 150 and minimum improvement $10^{-4}$. The InfoNCE temperature is 0.07 and its chunk size is 512. Reliable neighbor completion uses two layers, top $4$ latent consistency neighbors, and a reliability threshold of 0.2. The pretraining objective combines contrastive and generative losses with unit weights, decorrelation weight 0.05, latent reconstruction weight 0.3, and latent consistency weight 0.2. The cross modal reconstruction coefficient is 0.5, and hard negative mining considers at most 4,096 nodes with top $4$ hard negatives.

\noindent\textbf{Discriminative Fine Tuning.} For node classification, we use a 0.6/0.2/0.2 train/validation/test split generated by a seeded random permutation. We train for 100 epochs with AdamW~\cite{loshchilov2019decoupled} using learning rate $10^{-3}$ and weight decay $10^{-5}$, and stop after 30 epochs without validation improvement. The topology prompt has dropout 0.5. The text classifier is selected with fixed fusion weight 1.0. Few shot node classification uses $K\in\{3,5\}$ labeled nodes per class, 10 episodes, and the same prompt interface. For link prediction, we train for 200 epochs with learning rate $10^{-3}$, weight decay $10^{-5}$, and patience 50. We use 0.1/0.2 validation test ratios, one training negative and 1,000 evaluation negatives per positive edge, edge and score batches of 2,048, two hop neighbor fanouts of 15 and 15, and a three layer MLP predictor with hidden size 256 and dropout 0.02. The encoder learning rate scale is 0.1, and the gradient clipping threshold is 1.0. Our implementation computes Hits@1, Hits@3, and Hits@10; we report MRR and Hits@3.

\noindent\textbf{Generative Fine Tuning.} For G2Text, we follow the OpenMAG task formulation~\cite{wan2026openmag}. All methods use OPT-125M~\cite{zhang2022opt} with LoRA~\cite{hu2022lora} of rank 64 and scaling 128 applied to \texttt{q\_proj} and \texttt{v\_proj}. GraphBind constructs graph prefixes with the G2M-Adapter, while the compared baselines use a shared prefix projection module. We train for 15 epochs with batch size 4 per device, learning rate $10^{-4}$, weight decay $10^{-2}$, AdamW~\cite{loshchilov2019decoupled} parameters $(\beta_1,\beta_2)=(0.9,0.95)$, gradient clipping 1.0, maximum input and output lengths 96 and 64, and beam search with four beams with maximum generation length 64. The graph backend has hidden size 768, two layers, eight heads, dropout 0.1, and at most eight neighbor slots. For G2Image, we follow the OpenMAG G2Image protocol~\cite{wan2026openmag} and provide graph tokens to Stable Diffusion v1.5~\cite{rombach2022high}. We train at resolution $256\times256$ for 10 epochs with batch size 16, learning rate $5\times10^{-5}$, weight decay $10^{-2}$, gradient clipping 1.0, and graph context dropout 0.1. Inference uses 50 diffusion steps, guidance scale 7.5, graph context, and up to 4,277 evaluation samples. All methods finetune the Stable Diffusion v1.5 U-Net while keeping the VAE and text encoder frozen. Generated images are evaluated with CLIP ViT-L/14 and DINOv2-large~\cite{oquab2023dinov2}.

\noindent\textbf{Baseline Configuration.} The eleven baselines comprise five MAG models given by MMGCN, MGAT, LGMRec, TMTE, and RoleMAG, four GFMs given by GIT, GFT, OpenGraph, and RiemannGFM, and two MGFMs given by UniGraph2 and PLANET. For the MAG models, hidden dimensions are 128 for MMGCN, MGAT, and LGMRec and 256 for RoleMAG, with their standard layer counts and dropout values. TMTE uses hidden size 64, ten evolution iterations, four similarity perspectives, graph skip coefficient 0.8, and topology update ratio 0.1. RoleMAG uses three shared and backbone layers, modality weights $\lambda_v=\lambda_t=0.5$, and top 16 complementary neighbors. GIT and GFT use two layer GraphSAGE backbones with hidden size 256. GFT additionally uses a 256 entry codebook. OpenGraph uses a two layer GNN and a two layer graph Transformer with 256 dimensional tokens and 64 anchors. RiemannGFM uses 128 dimensional hidden states and 64 dimensional embeddings. UniGraph2 uses a 768 dimensional three layer encoder with eight experts and top 2 routing. PLANET uses 768 dimensional four layer Transformer blocks, five experts with top 2 routing, a 20,480 entry codebook, and 10 dimensional Laplacian positional encodings. Pretraining checkpoints and task specific optimization settings are selected consistently across methods.

\noindent\textbf{Node Classification.} We predict the class of each node on Grocery and Toys. The graph is split into 60\%/20\%/20\% train, validation, and test nodes. A representation from the topology prompt is fed to a linear text classifier trained with cross entropy for 100 epochs using learning rate $10^{-3}$, weight decay $10^{-5}$, and patience 30. Accuracy is computed as
\begin{equation}
\mathrm{Acc}=\frac{1}{N}\sum_{i=1}^{N}\mathbb{I}(\hat y_i=y_i),
\end{equation}
and macro-F1 as
\begin{equation}
\mathrm{F1}=\frac{1}{C}\sum_{c=1}^{C}\frac{2P_cR_c}{P_c+R_c},\qquad
P_c=\frac{TP_c}{TP_c+FP_c},\quad R_c=\frac{TP_c}{TP_c+FN_c}.
\end{equation}

\noindent\textbf{Few Shot Node Classification.} We evaluate Movies and RedditS with an episodic $N$ way $K$ shot protocol using $N=5$ and $K\in\{3,5\}$. In each of 10 episodes, five eligible classes are sampled. Exactly $K$ support nodes are sampled without replacement from the training partition for every selected class, while all validation and test nodes belonging to the selected classes form the corresponding query sets. Labels are remapped to the class index set of the episode. The classifier is selected by validation macro-F1, and accuracy and macro-F1 are averaged across episodes.

\noindent\textbf{Link Prediction.} On DY and Bili\_Dance, we learn to rank missing destination nodes for each source using one negative edge during training and 1,000 sampled negatives at evaluation. The model is trained for 200 epochs with learning rate $10^{-3}$, weight decay $10^{-5}$, a three-layer 256-hidden-unit predictor (dropout 0.02), and [15,15] two-hop neighbor sampling. For query $q$, the positive rank is
\begin{equation}
r_q=\sum_j\mathbb{I}(s_{qj}\ge s_q^+).
\end{equation}
We report MRR and Hits@K as
\begin{equation}
\mathrm{MRR}=\frac{1}{|Q|}\sum_{q\in Q}\frac{1}{r_q},
\end{equation}
\begin{equation}
\mathrm{Hits@K}=\frac{1}{|Q|}\sum_{q\in Q}\mathbb{I}(r_q\le K),\qquad K\in\{1,3,10\}.
\end{equation}

\noindent\textbf{G2Text.} Following the OpenMAG Flickr30k graph-to-text protocol~\cite{wan2026openmag}, the model generates a caption for each center node from its sampled multimodal neighbors. We use the OPT-125M/LoRA setup above. The main experiments report BLEU-4~\cite{papineni2002bleu} and ROUGE-L~\cite{lin2004rouge}, while CIDEr~\cite{vedantam2015cider} is additionally provided as a supplementary metric. For candidate corpus $c$ and reference corpus $r$, BLEU-4 is
\begin{equation}
\mathrm{BLEU\mbox{-}4}=\mathrm{BP}\exp\left(\sum_{n=1}^{4}w_n\log p_n\right),\qquad w_n=\frac{1}{4},
\end{equation}
where $p_n$ is the modified $n$-gram precision and
\begin{equation}
\mathrm{BP}=\begin{cases}1,&|c|>|r|,\\ \exp(1-|r|/|c|),&|c|\le |r|.\end{cases}
\end{equation}
ROUGE-L uses the longest common subsequence length $L_{\mathrm{LCS}}$ given below.
\begin{equation}
R_{\mathrm{LCS}}=\frac{L_{\mathrm{LCS}}}{|r|},\qquad P_{\mathrm{LCS}}=\frac{L_{\mathrm{LCS}}}{|c|},
\end{equation}
\begin{equation}
\mathrm{ROUGE\mbox{-}L}=\frac{(1+\beta^2)R_{\mathrm{LCS}}P_{\mathrm{LCS}}}{R_{\mathrm{LCS}}+\beta^2P_{\mathrm{LCS}}}.
\end{equation}
CIDEr averages the cosine similarity between TF IDF weighted $n$ gram vectors over $n=1,\ldots,4$ and $M$ references.
\begin{equation}
\mathrm{CIDEr}(c,r)=\frac{10}{4M}\sum_{n=1}^{4}\sum_{i=1}^{M}\frac{g_n(c)\cdot g_n(r_i)}{\lVert g_n(c)\rVert_2\lVert g_n(r_i)\rVert_2}.
\end{equation}

\noindent\textbf{G2Image.} Following the OpenMAG SemArt graph to image protocol~\cite{wan2026openmag}, diffusion guided by graph tokens synthesizes the image associated with each center node. We report CLIP-S and DINO-S. Let $e^{\mathrm{CLIP}}_{\mathrm{gen},i}$ and $e^{\mathrm{CLIP}}_{\mathrm{ref},i}$ denote the $\ell_2$ normalized CLIP ViT-L/14 image features~\cite{radford2021clip}. CLIP S is computed as
\begin{equation}
\mathrm{CLIP\mbox{-}S}=\frac{1}{N}\sum_{i=1}^{N}\max\left(\left\langle e^{\mathrm{CLIP}}_{\mathrm{gen},i},e^{\mathrm{CLIP}}_{\mathrm{ref},i}\right\rangle,0\right).
\end{equation}
Similarly, with $\ell_2$-normalized DINOv2 features~\cite{oquab2023dinov2}, DINO-S is
\begin{equation}
\mathrm{DINO\mbox{-}S}=\frac{1}{N}\sum_{i=1}^{N}\max\left(\left\langle e^{\mathrm{DINO}}_{\mathrm{gen},i},e^{\mathrm{DINO}}_{\mathrm{ref},i}\right\rangle,0\right).
\end{equation}
Percentage-style tables multiply these means by 100.

\end{document}